\documentclass[twoside,11pt]{article}

%

%
%
%

\usepackage{jmlr2e}
\usepackage{graphicx}  
\usepackage{bm, hyperref,mathtools,amsmath,scrextend, array, setspace, algorithm, textcomp, amssymb, comment, times, subfig}
\newcolumntype{M}[1]{>{\centering\arraybackslash}m{#1}}

\newtheorem{prop}{Proposition}



\jmlrheading{1}{2020}{1-48}{4/00}{10/00}{meila00a}{Siddharth Roheda and Hamid Krim}


\ShortHeadings{Volterra Neural Networks (VNNs)}{Roheda and Krim}
\firstpageno{1}

\begin{document}

\title{Volterra Neural Networks (VNNs)}

\author{\name Siddharth Roheda \email sroheda@ncsu.edu \\
       \addr Electrical and Computer Engineering Department\\
       North Carolina State University\\
       Raleigh, NC 27606, USA
       \AND
       \name Hamid Krim \email ahk@ncsu.edu \\
       \addr Electrical and Computer Engineering Department\\
       North Carolina State University\\
       Raleigh, NC 27606, USA
   		\AND
   		\name Bo Jiang \email bjiang8@ncsu.edu \\
   		\addr Electrical and Computer Engineering Department\\
   		North Carolina State University\\
   		Raleigh, NC 27606, USA}

\editor{}

\maketitle

\begin{abstract}
The importance of inference in Machine Learning (ML) has led to an explosive number of different proposals, particularly in Deep Learning. In an attempt to reduce the complexity of Convolutional Neural Networks, we propose a Volterra filter-inspired Network architecture. This architecture introduces controlled non-linearities in the form of interactions between the delayed input samples of data. We propose a cascaded implementation of Volterra Filtering so as to significantly reduce the number of parameters required to carry out the same classification task as that of a conventional Neural Network. We demonstrate an efficient parallel implementation of this Volterra Neural Network (VNN), along with its remarkable performance while retaining a relatively simpler and potentially more tractable structure. Furthermore, we show a rather sophisticated adaptation of this network to nonlinearly fuse the RGB (spatial) information and the Optical Flow (temporal) information of a video sequence for action recognition. The proposed approach is evaluated on UCF-101 and HMDB-51 datasets for action recognition, and is shown to outperform state of the art CNN approaches. The code-base for our paper is available on request (Fill \href{https://forms.gle/JvmKHLKLMeWzKsuJ8}{THIS} form).
\end{abstract}

\begin{keywords}
  Volterra Filter, Activity Recognition
\end{keywords}

\section{Introduction}
Human action recognition is an important research topic in Computer Vision, and may be useful in surveillance, video retrieval, and man-machine interaction to name a few. The survey on Action Recognition approaches \cite{kong2018human} provides a good progress overview. Video classification usually involves three stages \cite{wang2009evaluation,liu2009recognizing,niebles2010modeling,sivic2003video,karpathy2014large}, namely, visual feature extraction (local features like Histograms of Oriented Gradients (HoG) \cite{dalal2005histograms}, or global features like Hue, Saturation, etc.), feature fusion/concatenation, and lastly classification. In \cite{yi2011human}, an intrinsic stochastic modeling of human activity
on a shape manifold is proposed and an accurate analysis of the non-linear feature space of activity models is provided. The emergence of Convolutional Neural Networks (CNNs) \cite{lecun1998gradient}, along with the availability of large training datasets and computational resources have come a long way to obtaining the various steps by a single neural network. This approach has led to remarkable progress in action recognition in video sequences, as well as in other vision applications like object detection \cite{sermanet2013overfeat}, scene labeling \cite{farabet2012learning}, image generation \cite{goodfellow2014generative}, image translation \cite{isola2017image}, information distillation \cite{roheda2018cross,hoffman2016learning}, etc. In the Action Recognition domain, datasets like the UCF-101 \cite{soomro2012ucf101}, Kinetics \cite{kay2017kinetics}, HMDB-51 \cite{kuehne2011hmdb}, and Sports-1M \cite{karpathy2014large} have served as benchmarks for evaluating various solution performances. In action recognition applications the proposed CNN solutions generally align along two themes: \textit{1. One Stream CNN} (only use either spatial or temporal information), \textit{2. Two Stream CNN} (integrate both spatial and temporal information). Many implementations \cite{carreira2017quo,diba2017deep,feichtenhofer2016convolutional,simonyan2014two} have shown that integrating both streams leads to a significant boost in recognition performance. In Deep Temporal Linear Encoding \cite{diba2017deep}, the authors propose to use 2D CNNs (pre-trained on ImageNet \cite{deng2009imagenet}) to extract features from RGB frames (spatial information) and the associated optical flow (temporal information). The video is first divided into smaller segments for feature extraction via 2D CNNs. The extracted features are subsequently combined into a single feature map via a bilinear model. This approach, when using both streams, is shown to achieve a \textit{95.6 \%} accuracy on the UCF-101 dataset, while only achieving \textit{86.3 \%} when only relying on the RGB stream. Carreira et al. \cite{carreira2017quo} adopt the GoogLeNet architecture which was developed for image classification in ImageNet \cite{deng2009imagenet}, and use 3D convolutions (instead of 2D ones) to classify videos. This implementation is referred to as the Inflated 3D CNN (I3D), and is shown to achieve a performance of \textit{88.8 \%} on UCF-101 when trained from scratch, while achieving a \textit{98.0 \%} accuracy when a larger dataset (Kinetics) was used for pre-training the entire network (except for the classification layer). While these approaches achieve near perfect classification, the models are extremely heavy to train, and have a tremendous number of parameters (25M in I3D, 22.6M in Deep Temporal Linear Encoding). This in addition, makes the analysis, including the necessary degree of non-linearity, difficult to understand, and the tractability elusive. In this paper we explore the idea of introducing controlled non-linearities through interactions between delayed samples of a time series.
We will build on the formulations of the widely known Volterra Series \cite{volterra2005theory} to accomplish this task. While prior attempts to introduce non-linearity based on the Volterra Filter have been proposed \cite{kumar2011trainable,zoumpourlis2017non}
, most have limited the development up to a quadratic form on account of the explosive number of parameters required to learn higher order complexity structure. While quadratic non-linearity is sufficient for some applications (eg. system identification), it is highly inadequate to capture all the non-linear information present in videos. 

\textbf{Contributions:} In this paper, we propose a Volterra Filter \cite{volterra2005theory} based architecture where the non-linearities are introduced via the system response functions and hence by controlled interactions between delayed frames of the video. The overall model is updated on the basis of a cross-entropy loss of the labels resulting from a linear classifier of the generated features. An efficiently cascaded implementation of a Volterra Filter is used in order to explore higher order terms while avoiding over-parameterization. The Volterra filter principle is also exploited to combine the RGB and the Optical Flow streams for action recognition, hence yielding a performance driven non-linear fusion of the two streams. We further show that the number of parameters required to realize such a model is significantly lower in comparison to a conventional CNN, hence leading to faster training and a significant reduction of the required resources to learn, store, or implement such a model.

\section{Background and Related Work}
\subsection{Volterra Series}
A dynamical system, viewed as a black box, is characterized by its input/output relationship $y_t/x_t$. If a non-linear system is time invariant, the input/output relation can be expressed in the following form \cite{volterra2005theory,schetzen1980volterra},
\begin{gather}
	\label{Volterra_Series}
	\begin{align}
		&y_t = \sum_{\tau_1=0}^{L-1} \bm{W}^{\bm{1}}_{\tau_1}x_{t-\tau_1} + \sum_{\tau_1,\tau_2=0}^{L-1}  \bm{W}^{\bm{2}}_{\tau_1, \tau_2}x_{t-\tau_1}x_{t-\tau_2} + ... + \sum_{\tau_1,\tau_2,...,\tau_K=0}^{L-1} \bm{W}^{\bm{K}}_{\tau_1,\tau_2,...,\tau_K}x_{t-\tau_1}x_{t-\tau_2}...x_{t-\tau_K},
	\end{align}
\end{gather}
where $x_t, y_t \in \mathbb{R}$ and $\bm{W}^{\bm{k}} \in \mathbb{R}^k$, $L$ is the number of terms in the filter memory (also referred to as the filter length), $\bm{W}^{\bm{k}}$ are the weights for the $k^{th}$ order term, and $\bm{W}^{\bm{k}}_{\tau_1,\tau_2,...,\tau_k} = 0$, for any $\tau_j < 0$, $j = 1,2,...,k$, $\forall k=1,2,...,K$ due to causality.
This functional form is due to the mathematician Vito Volterra \cite{volterra2005theory}, and is widely referred to as the Volterra Series.
\begin{figure}[h!]
	\centering
	\includegraphics[width=0.4\textwidth]{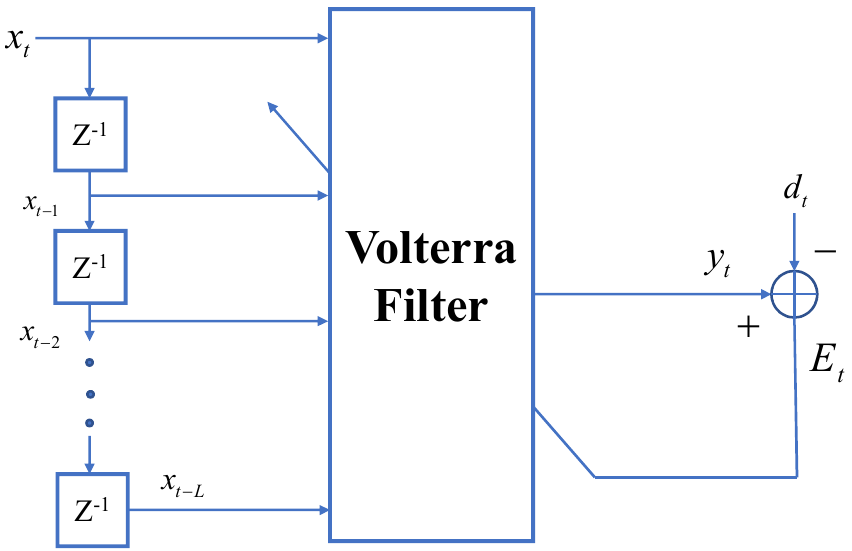}
	\caption{Adaptive Volterra Filter}
	\label{VF}
\end{figure}
The calculation of the kernels is computationally complex, and a $K^{th}$ order filter of length $L$, entails solving $L^K$ equations. The associated adaptive weights are a result of a target energy functional minimization iteratively adapting the filter taps as shown in Figure \ref{VF}. 

It is worth observing from Equation \ref{Volterra_Series} that the linear term is actually similar to a convolutional layer in CNNs. Non-linearities in CNNs are implicitly introduced via activation functions, following the convolutional layer, while those in Equation \ref{Volterra_Series} are introduced via higher order convolutions.

\subsection{Nested Volterra Filter}
A closer inspection of Equation \ref{Volterra_Series} immediately suggests its somewhat simplified form by way of nesting repeated terms \cite{osowski1994multilayer}, and yield, 
\begin{equation}
	\label{Nested_VF}
	\begin{split}
		y_t = \sum_{\tau_1 = 0}^{L-1} x_{t - \tau_1}\Biggl[ \bm{W}^{\bm{1}}_{\tau_1} + \sum_{\tau_2=0}^{L-1} x_{t - \tau_2} \biggl[ \bm{W}^{\bm{2}}_{\tau_1,\tau_2} + \sum_{\tau_3=0}^{L-1} x_{t-\tau_3}[ \bm{W}^{\bm{3}}_{\tau_1,\tau_2,\tau_3} + ... ]  \biggr] \Biggr].
	\end{split}
\end{equation}
Each factor contained within brackets can be interpreted as the output of a linear Finite Impulse Response (FIR) filter, thus allowing a layered representation of the Filter. A nested filter implementation with $L=2$ and $K=2$ is shown in Figure \ref{Nested_fig}. The length of the filter is increased by adding modules in parallel, while the order is increased by additional layers. Much like any multi-layer network, the weights of the synthesized filter are updated at successive layers according to a backpropogation scheme. 
\begin{figure}[h!]
	\centering
	\includegraphics[width=0.47\textwidth]{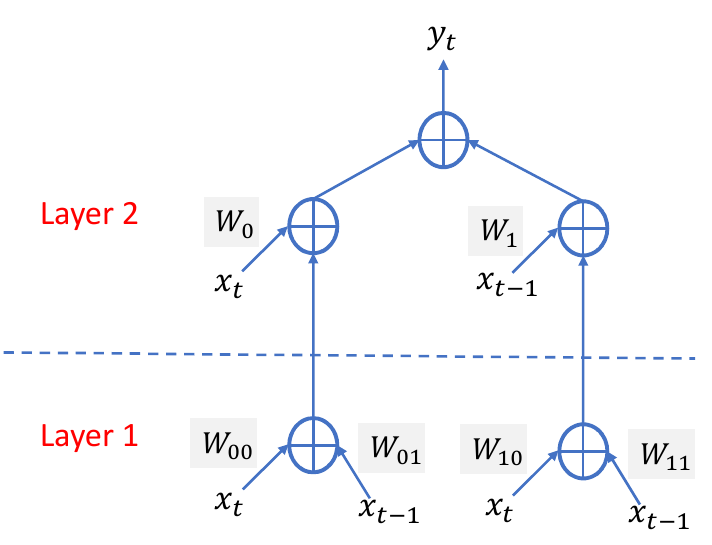}
	\caption{Nested Volterra Filter}
	\label{Nested_fig}
\end{figure}
The nested structure of the Volterra Filter, yields much faster learning in comparison to that based on Equation \ref{Volterra_Series}. It, however, does not reduce the number of parameters to be learned, leading to potential over-parameterization when learning higher order filter approximations. 
Such a structure was used for a system identification problem in \cite{osowski1994multilayer}. The mean square error between the desired signal ($d_t$) and the output of the filter ($y_t$) was used as the cost functional to be minimized,
\begin{equation}
	E_t = \frac{1}{2}(d_t - y_t)^2,
\end{equation}
and the weights for the $k^{th}$ layer are updated per the following equations,
\begin{gather}
	\begin{align}
		& \bm{W^k}_{\tau_1,\tau_2,...,\tau_k} (t+1) = \bm{W^k}_{\tau_1,\tau_2,...,\tau_k} (t) - \eta \frac{\partial E_t}{\partial \bm{W^k}_{\tau_1, \tau_2, ..., \tau_k}},\\
		&\frac{\partial E_t}{\partial \bm{W^k}_{\tau_1, \tau_2, ..., \tau_k}} = x_{t-\tau_k}x_{t-\tau_{k-1}}...x_{t-\tau_1}(y_t-d_t).
	\end{align}
\end{gather}

\subsection{Relation to Bilinear Convolution Neural Networks}
There has been work on introducing $2^{nd}$ order non-linearities in the network by using a bi-linear operation on the features extracted by convolutional networks. Bilinear-CNNs (B-CNNs) were introduced in \cite{lin2015bilinear} and were used for image classification. In B-CNNs, a bilinear mapping is applied to the final features extracted by linear convolutions leading to $2^{nd}$ order features which are not well localized. As a result a feature extracted from the lower right corner of a frame/image in the B-CNN case, may interact with a feature from the upper right corner, and this potential non-real interaction may introduce erroneous additional characteristics, in contrast to our proposed approach which highly controls such effects. Compact Bilinear Pooling was introduced in \cite{gao2016compact} where a bilinear approach to reduce feature dimensions was introduced. This was again performed after all the features had been extracted via linear convolutions and was limited to quadratic non-linearities. In our work we will explore non-linearities of much higher orders and also account for continuity of information between video frames over a given time interval with the immediately preceding period. This effectively achieves a Recurrent Network-like property which accounts for a temporal relationship. 

\subsection{Relation to Long-Short Term Memory}
\label{LSTM_rel_work}
Long-Short Term Memory Networks (LSTMs) \cite{hochreiter1997long} have been widely used to capture the long-term trajectory information in temporally evolving data. The sequential modeling
ability of LSTMs makes them particularly appealing for capturing long-range temporal dynamics in videos. An LSTM computes a mapping from an input sequence, $x = \{x_1,...,x_T\}$ to an output sequence $h = \{h_1,...,h_T\}$. 
The mapping of the input at time $t$, $x_t$ to the output, $h_t$ entails the following:
\begin{enumerate}
	\item \textbf{Forget Gate: }
	\begin{equation}
		\label{forget_gate}
		f_t = \sigma(W_{f}[h_{t-1}, x_t]) = \sigma(W_f^1 \cdot h_{t-1} + W_f^2 \cdot {x_t}).
	\end{equation}
	\item \textbf{Input Gate: }
	\begin{gather}
		\begin{align}
			& i_t = \sigma(W_{i}[h_{t-1}, x_t]) = \sigma(W_i^1 \cdot h_{t-1} + W_i^2 \cdot {x_t}),\\
			& \tilde{C_t} = \text{tanh}(W_c[h_{t-1}, x_t]),\\
			& C_t = f_t \cdot C_{t-1} + i_t \cdot \tilde{C_t}.	\label{input_gate}	
		\end{align}
	\end{gather}
	\item \textbf{Output Gate: }
	\begin{gather}
		\begin{align}
			&  o_t = \sigma(W_o[h_{t-1},x_t]) = \sigma(W_o^1 \cdot h_{t-1} + W_o^2 \cdot {x_t}),\\
			& h_t = o_t \cdot \text{tanh}(C_t). \label{h_t}
		\end{align}
	\end{gather}
\end{enumerate}
Where $\sigma$ is the sigmoid activation function, $\text{tanh}$ is the hyperbolic tangent function, and $W_f, W_i, W_c,$ and $W_o$ are the weight matrices characterizing the forget gate, input gate, cell state, and output gate respectively.
\section{Problem Statement}
Let a set of activities $\mathcal{A} = \{a_1,...,a_I\}$, be of interest following an observed sequence of frames $\bm{X}_{T \times H \times W}$, where $T$ is the total number of frames, and $H$ and $W$ are the height and width of a frame. At time $t$, the features $\bm{F}_t = g(\bm{X}_{[t-L+1:t]})$, will be used for classification of the sequence of frames $\bm{X}_{[t-L+1:t]}$ and mapped into one of the actions in $\mathcal{A}$, where $L$ is the number of frames in the memory of the system/process. A linear classifier with weights $\bm{W}^{cl} = \{\bm{w}^{cl}_i\}_{i = 1,...,I}$, and biases $\bm{b}^{cl} = \{ b_i^{cl} \}_{i = 1,...,I}$ will then be central to determining the classification scores for each activity, followed by a soft-max function ($\rho(.)$) to convert the scores into a probability measure. The probability that the sequence of frames be associated to the $i^{th}$ action class is hence the result of,
\begin{equation}
	P_t(a_i) =\rho(\bm{w}_i^{{cl}^T} \cdot \bm{F}_{t} + b_i^{cl}) = \frac{exp(\bm{w}_i^{{cl}^T}.\bm{F}_{t} + b_i^{cl})}{\sum_{m=1}^{I} exp(\bm{w}_m^{{cl}^T}.\bm{F}_{t} + b_m^{cl})}.
\end{equation}

\section{Proposed Solution}
\subsection{Volterra Filter based Classification}
In our approach we propose a Volterra Filter structure to approximate a function $g(\cdot)$. Given that video data is of interest here, a spatio-temporal Volterra Filter must be applied. 
As a result, this 3D version of the Volterra Filter discussed in Section 2 is used to extract the features,
\begin{gather}
	\label{3dvf}
	\begin{align} 
		&\nonumber \bm{F}_{\left[\substack{t\\ s_1\\ s_2}\right]} = g\left(\bm{X}_{\left[ \substack{t-L+1:t \\ s_1 - p_1 : s_1 + p_1\\ s_2 - p_2 : s_2 + p_2} \right]}\right) = \sum_{\tau_1, \sigma_{11}, \sigma_{21}} \bm{W}^{\bm{1}}_{\left[\substack{\tau_1 \\\sigma_{11} \\\sigma_{21}}\right]} x_{\left[\substack{t - \tau_1 \\s_1 - \sigma_{11} \\s_2 - \sigma_{21}}\right]} \\
		& + \sum_{\substack{\tau_1, \sigma_{11}, \sigma_{21}\\ \tau_2, \sigma_{12}, \sigma_{22}}} \bm{W}^{\bm{2}}_{\left[\substack{\tau_1 \\\sigma_{11} \\\sigma_{21}}\right] \left[\substack{\tau_2 \\\sigma_{12} \\\sigma_{22}}\right]} x_{\left[\substack{t - \tau_1 \\s_1 - \sigma_{11} \\s_2 - \sigma_{21}}\right]} x_{\left[\substack{t - \tau_2 \\s_1 - \sigma_{12} \\s_2 - \sigma_{22}}\right]} + ...
	\end{align}
\end{gather}
\begin{figure*}[tbp]
	\centering
	\includegraphics[width = 0.95\textwidth]{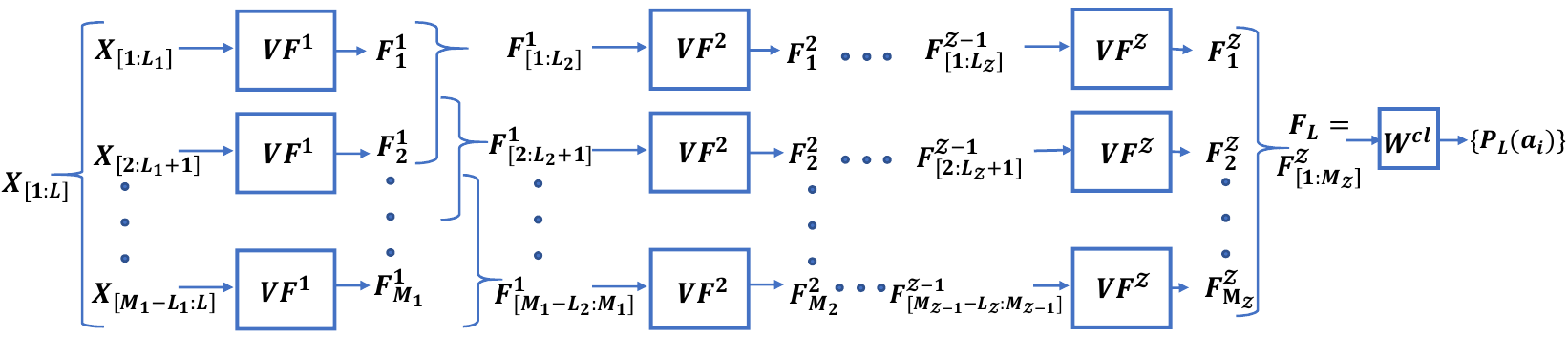}
	\centering
	\caption{Block diagram for an Overlapping Volterra Neural Network}
	\label{bd_vf}
\end{figure*}
where, $\tau_j \in [0,L-1]$, $\sigma_{1j} \in [-p_1,p_1]$, and $\sigma_{2j} \in [-p_2, p_2]$ respectively represent the temporal and spatial translations (horizontal and vertical directions).
Following this formulation, and as discussed in Section 3, a linear classifier is used to determine the probability of each activity in $\mathcal{A}$. Updating the filter parameters is pursued by minimizing some measure of discrepancy relative to the ground truth and the probability determined by the model. Our adopted measure herein is the cross-entropy loss computed as, 
\begin{equation}
	E = \sum_{t,I} -d_{t_i} log P_{t}(a_i),
\end{equation}
where, $t \in \{1,L+1,2L+1,...,T\}$, $i \in \{1,2,...,I\}$, and $d_{t_i}$ is the ground truth label for $\bm{X}_{[t-L+1:t]}$ belonging to the $i^{th}$ action class.
In addition to minimizing the error, we also include a weight decay in order to ensure generalizability of the model by penalizing large weights. So, the overall cost functional which serves as a target metric is written as,
\begin{equation}
	\begin{split}
		\min_{g} \sum_{t,I} -d_{t_i} \log \rho(\bm{w}_i^{cl^{T}} \cdot g(\bm{X}_{[t-L+1:t]}) + b_i^{cl}) + \frac{\lambda}{2} \left[ \sum_{k=1}^{K} \left\| \bm{W^k} \right\|_2^2 + \left\| \bm{W}^{cl} \right\|_2^2 \right],
	\end{split}
\end{equation}
where $\rho$ is the soft-max function, and $K$ is the order of the filter. 

\begin{prop}
	A VNN architecture provides any continuous function over a compact $\Omega \subset  {\mathbb R}^d$ (including so-called activation functions) an approximation up to an error margin defined by the Taylor Remainder Theorem.
\end{prop}

\begin{proof}
	Based on the Weistrass Approximation Theorem \cite{stone1948generalized} it is known that any continuous non-linear function can be approximated using a polynomial. Specifically, the taylor expansion of the non-linear function may be used (i.e. a generic so-called activation function) for $x\in \Omega\subset {\mathbb R}$,
	\begin{equation}
		\sigma(x) = c^0 + c^1x + c^2x^2 + ... + c^kx^k + ... + c^{\infty}x^{\infty}. 
	\end{equation}
	In particular, a sigmoid activation can be approximated as,
	\begin{equation}
		\sigma_{sigmoid}(x) = \frac{1}{1+e^{-x}} = \frac{1}{2} + \frac{1}{4}x - \frac{1}{48}x^3 + \frac{1}{480} x^5 + ... 
		\label{sigmoid}
	\end{equation}
	As seen from Equation \ref{Volterra_Series} the VNN formulation can specifically learn such an expansion up to a finite order,
	\begin{equation}
		\sigma_{VNN} (x) = w_0 + w_1x + w_2x^2 + ... + w_kx^k,
		\label{volterra_k}
	\end{equation}
	which is a $k^{th}$ order approximation of $\sigma(x)$. Here $w_k$ is the $k^{th}$ order weight and is learned during the training process. In comparison to the coefficients in Equation \ref{sigmoid}, we get an approximation of the sigmoid activation function.
	
	A finite order polynomial expansion yields an error which can be expressed via the Taylor Remainder Theorem \cite{firey1960remainder},
	\begin{equation}
		\lvert \sigma(x) - \sigma_{VNN}(x) \rvert \leq R_k = \left\lvert \frac{\sigma^{k+1}(m)}{(k+1)!} (x-a)^{k+1} \right \rvert
		\label{taylor_remainder_thm}
	\end{equation}  
	where the Taylor Expansion is centered around $a$ and $m$ lies between $a$ and $x$.
	
\end{proof}

Figure \ref{Approximations} shows the approximation of various activation functions via the Volterra series formulation. Note that the $w_k's$ in Equation \ref{volterra_k} are actually learnable weights and are updated on the basis of the classification error when comparing the classifier output with the ground truth. N.B. The coefficients in Equation \ref{sigmoid} are for a generic "sigmoid" function, and are to be distinguished from the learnable coefficients when using it as a local approximation of the data.

\begin{figure}
	\centering
	\includegraphics[width=0.65\textwidth]{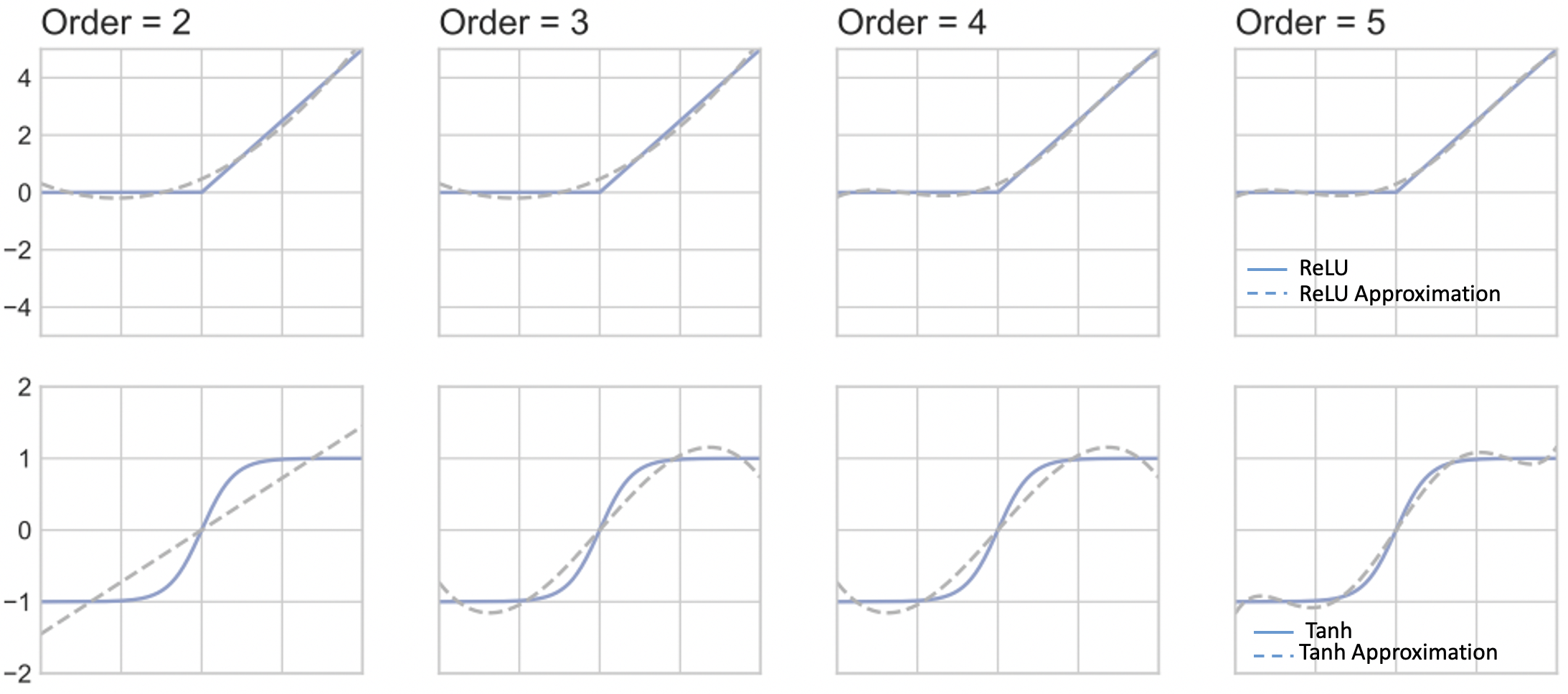}
	\caption{Approximation of ReLU and Tanh activation via the Volterra series formulation}
	\label{Approximations}
\end{figure}


\begin{prop}
	A VNN approximation independently achieves optimal weighting of higher order moments and cross-moments of data, for a more adaptive representation than that possible by activation functions.
\end{prop}
\begin{proof}
	Consider an input $X = [x_1, x_2]$. A standard single layer neural network would take the dot product of $X$ with the linear weights $W = [w_1, w_2]$ and apply an activation function $\sigma(.)$ to get the output $y = \sigma(W^T.X) = \sigma(w_1x_1 + w_2x_2)$.
	
	As discussed in Proposition 1, the activation function (eg. ReLU, sigmoid, tanh) $\sigma(.)$ can be approximated by its Taylor series expansion. Considering a $2^{nd}$ order approxmation,
	
	\begin{gather}
		\begin{align}
			& \nonumber \text{ } y = c_0 + c_1 (w_1x_1 + w_2x_2) + c_2 (w_1x_1 + w_2x_2)^2\\
			& \quad = c_0 + c_1w_1x_1+c_1w_2x_2 + c_2w_1^2x_1^2 + c_2w_2^2x_2^2 + 2c_2w_1w_2x_1x_2.
		\end{align}
	\end{gather} 
	This can be rewritten as,
	\begin{equation}
		y = \alpha_0 + \alpha_1x_1 + \alpha_2x_2 + \alpha_3x_1^2 + \alpha_4x_2^2 + \alpha_5x_1x_2.
		\label{alpha_eq}
	\end{equation}
	From Equation 20 and \ref{alpha_eq}, 
	\begin{gather}
		\begin{align}
			&\alpha_0 = c_0 \text{; }  \alpha_1 = c_1w_1 \text{; }\\
			&\alpha_2 = c_1w_2 \text{; } \alpha_3 = c_2w_1^2 \text{; }\\
			&\alpha_4 = c_2w_2^2 \text{; } \alpha_5 = c_1c_2w_1w_2.
		\end{align}
	\end{gather}
	As a result,
	\begin{equation}
		y = \alpha_0 + \alpha_1x_1 + \alpha_2x_2 + \frac{c_2}{c_1}\alpha_1^2x_1^2 + \frac{1}{c_2}\alpha_2^2x_2^2 + \frac{2}{c_1}\alpha_1\alpha_2x_1x_2.
	\end{equation}
	
	It is clear that the moments (monomials) and cross-moments (cross-products) weights are computationally tied.  This unavoidable and implicit constraint among the moments and cross-moments make  the adaptive approximation of such features unlikely. This clearly scales to higher oder approximation highlighting the flexibility of VNN in independently adapting the associated weights.
	
	The additional relative feature preservation limitation (at different scales) of activation functions  is reflected in the allocated weighting of higher order according to the  Taylor series evolution. For an $n^{th}$ order taylor approximation we have, 
	\begin{equation}
		\sigma(x) = \sum_{n=0}^{N} \frac{\sigma^{(n)}(a)}{n!} (x-a)^n,
	\end{equation}
	where $\sigma^{(n)}(a)$ is the $n^{th}$ derivative of $\sigma$ at $a$. As seen from this equation the $n^{th}$ coefficient, $c_n$ is given as $c_n = \frac{\sigma^{(n)}(a)}{n!}$. As a result of the $n!$ in the denominator, $c_{n+1} < c_{n} \text{ } \forall n$. This leads to decreasing importance of higher order features, and makes it difficult for the model to learn them with no regards to their potentially discriminative role in inference.
\end{proof}

The Volterra series formulation, in contrast, assigns an independent learnable weight to each term in Equation \ref{alpha_eq}.
\subsection{Cascaded Volterra Filters: Managing High Order Complexity}
A major challenge in learning the afore-mentioned architecture arises when higher order non-linearities are sought. The number of required parameters for a $K^{th}$ order filter is, 
\begin{equation}
	\sum_{k=1}^K (L \cdot [2p_1+1] \cdot [2p_2+1])^k. 
	\label{num_param_Kth}
\end{equation}
This complexity increases exponentially when the order is increased, thus making a higher order ($>3$) Volterra Filter implementation impractical. To alleviate this limitation, we use a cascade of $2^{nd}$ order Volterra Filters, wherein, the second order filter is repeatedly applied until the desired order is attained. 

A $K^{th}$ order filter is realized by applying the $2^{nd}$ order filter $\mathcal{Z}$ times, where, $K = 2^{2^{(\mathcal{Z}-1)}} $. 
If the length of the first filter in the cascade is $L_1$, the input video $\bm{X}_{[t-L+1:t]}$ can be viewed as a concatenation of a set of shorter videos,
\begin{equation}
	\begin{split}
		\bm{X}_{[t_L:t]} = \biggl[ \bm{X}_{[t_L:t_L+L_1]} \text{ } \bm{X}_{[t_L+L_1:t_L+2L_1]} ... \bm{X}_{[t_L+(M_1-1)L_1:t_L+M_1L_1]} \biggr],
	\end{split}
\end{equation}
where $M_1 = \frac{L}{L_1}$, and $t_L = t-L+1$. Now, a $2^{nd}$ order filter $g_1(.)$ applied on each of the sub-videos leads to the features,
\begin{equation}
	\begin{split}
		\bm{F}^1_{t_{[1:M_1]}} = \biggl[ g_1(\bm{X}_{[t_L:t_L+L_1]}) \quad g_1( \bm{X}_{[t_L+L_1:t_L+2L_1]}) ... g_1( \bm{X}_{[t_L+(M_1-1)L_1:t_L+M_1L_1]})  \biggr].
	\end{split}
\end{equation}
A second filter $g_2(.)$ of length $L_2$ is then applied to the output of the first filter,
\begin{equation}
	\begin{split}
		\bm{F}^2_{t_{[1:M_2]}} = \biggl[ g_2( \bm{F}^1_{t_{[1:L_2]}} ) \quad g_2( \bm{F}^1_{t_{[L_2+1:2L_2]}}) ... g_2( \bm{F}^1_{t_{[(M_2-1)L_2+1:(M_2L_2)]}})  \biggr],
	\end{split}
\end{equation}
where, $M_2 = \frac{M_1}{L_2}$. Note that the features in the second layer are generated by taking quadratic interactions between those generated by the first layer, hence, leading to $4^{th}$ order terms.

Finally, for a cascade of $\mathcal{Z}$ filters, the final set of features is obtained as,
\begin{equation}
	\begin{split}
		\bm{F}^\mathcal{Z}_{t_{[1:M_\mathcal{Z}]}} = \biggl[ g_\mathcal{Z}( \bm{F}^{\mathcal{Z}-1}_{t_{[1:L_\mathcal{Z}]}} ) \quad g_\mathcal{Z}( \bm{F}^{\mathcal{Z}-1}_{t_{[L_\mathcal{Z}+1:2L_\mathcal{Z}]}}) ... g_\mathcal{Z}( \bm{F}^{\mathcal{Z}-1}_{t_{[(M_\mathcal{Z}-1)L_\mathcal{Z}+1:(M_\mathcal{Z}L_\mathcal{Z})]}})  \biggr],
	\end{split}
\end{equation}
where, $M_\mathcal{Z} = \frac{M_{\mathcal{Z}-1}}{L_\mathcal{Z}}$.

Note that these filters can also be implemented in an overlapped structure yielding the following features for the $z^{th}$ layer, $z \in \{ 1,...,\mathcal{Z} \}$,
\begin{equation}
	\begin{split}
		\bm{F}^z_{t_{[1:M_z]}} = \biggl[ g_z( \bm{F}^{z-1}_{t_{[1:L_z]}} ) \quad g_z( \bm{F}^{z-1}_{t_{[2:L_z+1]}}) ... g_z( \bm{F}^{z-1}_{t_{[(M_{z-1}) - L_z+1:M_{z-1}]}})  \biggr],
	\end{split}
\end{equation}
where $M_z = M_{z-1} - L_z + 1$.
The implementation of an Overlapping Volterra Neural Network (O-VNN) to find the corresponding feature maps, for an input video is shown in Figure \ref{bd_vf}.

\begin{prop}
	\label{VNN_order_prop}
	If $\mathcal{Z}$ $2^{nd}$ order filters are cascaded following the noted overlapped structure (as in Figure \ref{bd_vf}), the resulting Volterra Network has an effective order of $K_{\mathcal{Z}} = 2^{2^{\mathcal{Z}-1}}$.
\end{prop}

\begin{proof}
	Since each layer of the O-VNN is a $2^{nd}$ order Volterra Filter, the order at the $\mathcal{Z}^{th}$ layer can be written in terms of the order of the previous layer,
	\begin{equation}
		K_{\mathcal{Z}} = K_{\mathcal{Z}-1}^2,
		\label{rel_prev_lyr}
	\end{equation}
	where, $K_{\mathcal{Z}-1}$ is the order of the $(\mathcal{Z}-1)^{th}$ layer. 
	Since, the O-VNN only consists of $2^{nd}$ order layers, there exists some $p$ such that,
	\begin{equation}
		K_{\mathcal{Z}} = 2^p.
		\label{2^p}
	\end{equation}
	From Equations (\ref{rel_prev_lyr}) and (\ref{2^p}), 
	\begin{equation}
		2^p = K_{\mathcal{Z}-1}^2
	\end{equation}
	Taking $\log_2$ on both sides,
	\begin{gather}
		\begin{align}
			&\nonumber \log_2 2^p = \log_2 K_{\mathcal{Z}-1}^2\\
			& \nonumber \implies p = 2 \log_2 K_{\mathcal{Z}-1}\\
			& \nonumber \implies p = 2 \log_2 K_{\mathcal{Z}-2}^2\\
			& \nonumber \implies p = 2^2 \log_2 K_{\mathcal{Z}-2}\\
			& \implies p = 2^{(\mathcal{Z}-1)} \log_2 K_1
		\end{align}
	\end{gather}
	Since $K_1 = 2$ and $\log_2 2 =1$,
	\begin{equation}
		p = 2^{\mathcal{Z}-1}.
	\end{equation}
	Putting this in Equation (\ref{2^p}), we get,
	\begin{equation}
		K_{\mathcal{Z}} = 2^{2^{\mathcal{Z}-1}}.
	\end{equation}
\end{proof}

A natural question which arises about the resulting relative complexity of our proposed strategy, is addressed next.
\begin{prop}
	The complexity of a $K^{th}$ order cascaded Volterra filter will consist of,
	\begin{gather}
		\begin{align}
			& \sum_{z=1}^{\mathcal{Z}}\biggl[ (L_z \cdot [2p_{1_z}+1] \cdot [2p_{2_z}+1])  + (L_z \cdot [2p_{1_z}+1] \cdot [2p_{2_z}+1])^2 \biggr]
		\end{align}
	\end{gather}
	parameters.
\end{prop}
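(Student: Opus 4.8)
The plan is a direct parameter-counting argument, obtained by specializing the single-filter complexity formula of Equation~\ref{num_param_Kth} to the quadratic case and then summing over the stages of the cascade.

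First I would recall that, by construction, every stage of the cascade is a $2^{nd}$ order $3$D Volterra filter: the $z^{th}$ stage $g_z(\cdot)$ acts with temporal memory $L_z$ and spatial half-widths $p_{1_z}, p_{2_z}$ on the feature maps produced by stage $z-1$ (on the raw video when $z=1$). In the $3$D setting of Equation~\ref{3dvf}, such a filter has $L_z \cdot [2p_{1_z}+1] \cdot [2p_{2_z}+1]$ effective taps along its three axes. Substituting $K = 2$ into Equation~\ref{num_param_Kth} then gives the exact parameter budget of a single stage: the first-order kernel $\bm{W}^{\bm{1}}$ contributes $L_z[2p_{1_z}+1][2p_{2_z}+1]$ weights, while the second-order kernel $\bm{W}^{\bm{2}}$ contributes the square of that quantity.

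Second I would note that the stages carry independent, untied parameters, since each $g_z$ owns a private pair $(\bm{W}^{\bm{1}}, \bm{W}^{\bm{2}})$ and no weight is reused across stages. The total count is therefore the simple sum of the per-stage budgets over $z = 1, \dots, \mathcal{Z}$, which is exactly the claimed expression. The correspondence with the order $K$ is then supplied by the relationship $K = 2^{2^{(\mathcal{Z}-1)}}$ stated above: each quadratic stage raises the effective order of its input, and this relation records the order realized after $\mathcal{Z}$ compositions, fixing $\mathcal{Z} = 1 + \log_2\log_2 K$ and hence the number of summands.

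There is no hard analytic step here; the only point demanding care is conceptual rather than computational. One must confirm that the cascade never instantiates the degree-$K$ kernel as a single monolithic object (doing so would reinstate the $\sum_{k=1}^{K}(\cdot)^k$ blow-up of Equation~\ref{num_param_Kth}), and that the high order is instead obtained purely through composition, so that each stage's storage stays pinned at the quadratic count. This is precisely the content worth emphasizing: the cascade converts a parameter count that is exponential in the order $K$ into one that grows only linearly in the number of stages $\mathcal{Z}$, i.e.\ doubly logarithmically in $K$.
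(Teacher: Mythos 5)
Your proposal is correct and follows essentially the same route as the paper's own proof: specialize the count of Equation~\ref{num_param_Kth} to $K=2$ for a single stage, then sum the untied per-stage budgets over the $\mathcal{Z}$ stages, with the order supplied by $K = 2^{2^{(\mathcal{Z}-1)}}$. The additional remarks on why the monolithic degree-$K$ kernel is never instantiated are a nice emphasis but do not change the argument.
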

\begin{proof}
	For a $2^{nd}$ order filter ($K=2$), the number of parameters required is $\biggl[(L \cdot [2p_1+1] \cdot [2p_2+1]) + (L \cdot [2p_1+1] \cdot [2p_2+1])^2\biggr]$ (from equation \ref{num_param_Kth}). When such a filter is repeatedly applied $\mathcal{Z}$ times, it will lead to $\sum_{z=1}^{\mathcal{Z}}\biggl[ (L_z \cdot [2p_{1_z}+1] \cdot [2p_{2_z}+1])+ (L_z \cdot [2p_{1_z}+1] \cdot [2p_{2_z}+1])^2 \biggr]$ parameters with order $K=2^{2^{(\mathcal{Z}-1)}}$.
\end{proof}
Furthermore, if a multi-channel input/output is considered, the number of parameters is,
\begin{gather}
	\begin{align}
		& \sum_{z=1}^{\mathcal{Z}} (n_{ch}^{z-1} \cdot n_{ch}^z) \biggl[ (L_z \cdot [2p_{1_z}+1] \cdot [2p_{2_z}+1]) + (L_z \cdot [2p_{1_z}+1] \cdot [2p_{2_z}+1])^2 \biggr],
	\end{align}
\end{gather}
where $n_{ch}^z$ is the number of channels in the output of the $z^{th}$ layer.

\subsection{System Stability and Convergence}
The discussed system can be shown to be stable when the input is bounded, i.e. the system is Bounded Input Bounded Output (BIBO) stable.  
\begin{prop}
	An O-VNN with $\mathcal{Z}$ layers is BIBO stable if $\forall z \in \{1,...,\mathcal{Z}\}$,
	\begin{equation}
		\sum_{\tau_1, \sigma_{11}, \sigma_{21}} \left|\bm{W^{z1}}_{\left[ \substack{\tau_1\\ \sigma_{11}\\ \sigma_{21}} \right]} \right| + \sum_{\substack{\tau_1, \sigma_{11}, \sigma_{21}\\ \tau_2, \sigma_{12}, \sigma_{22}}} \left| \bm{W^{z2}}_{\left[ \substack{\tau_1\\ \sigma_{11}\\ \sigma_{21}} \right] \left[ \substack{\tau_2\\ \sigma_{12}\\ \sigma_{22}} \right] } \right| < \infty.
		\label{sufficient_cond1}
	\end{equation}
\end{prop}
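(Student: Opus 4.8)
The plan is to establish boundedness of the final feature map $\bm{F}^{\mathcal{Z}}$ by induction on the layer index $z$, using the triangle inequality inside each $2^{nd}$ order filter together with the summability hypothesis \ref{sufficient_cond1} to pass from a bounded input to a bounded output. Since BIBO stability is exactly the statement that a bounded input produces a bounded output, it suffices to show that every layer maps uniformly bounded signals to uniformly bounded signals, and then to chain these implications through the $\mathcal{Z}$ layers.

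First I would analyze a single layer in isolation. Suppose the input to layer $z$ is uniformly bounded, say every entry satisfies $|x| \le B_{z-1} < \infty$. Each output sample is the sum of a linear term and a quadratic term in these inputs; bounding the quadratic cross terms by $|x_{t-\tau_1}x_{t-\tau_2}| \le B_{z-1}^2$ and applying the triangle inequality yields
\begin{equation}
|\,y\,| \le B_{z-1}\sum_{\tau_1,\sigma_{11},\sigma_{21}} \left|\bm{W^{z1}}_{\left[\substack{\tau_1\\\sigma_{11}\\\sigma_{21}}\right]}\right| + B_{z-1}^2 \sum_{\substack{\tau_1,\sigma_{11},\sigma_{21}\\\tau_2,\sigma_{12},\sigma_{22}}} \left|\bm{W^{z2}}_{\left[\substack{\tau_1\\\sigma_{11}\\\sigma_{21}}\right]\left[\substack{\tau_2\\\sigma_{12}\\\sigma_{22}}\right]}\right|.
\end{equation}
By hypothesis \ref{sufficient_cond1} both weighted sums are finite, so the right-hand side defines a finite constant $B_z$, and hence every output entry of layer $z$ obeys $|y|\le B_z$.

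I would then run the induction over $z\in\{1,\dots,\mathcal{Z}\}$. The base case is the BIBO assumption that the raw video input is bounded, $B_0<\infty$ (pixel intensities are finite). The inductive step is precisely the single-layer estimate above: a finite $B_{z-1}$ produces a finite $B_z = B_{z-1}S^1_z + B_{z-1}^2 S^2_z$, where $S^1_z$ and $S^2_z$ abbreviate the two summations appearing in \ref{sufficient_cond1}. After the $\mathcal{Z}$ applications, the output $\bm{F}^{\mathcal{Z}}$ is bounded by a finite $B_{\mathcal{Z}}$, which is the conclusion. The overlapping grouping of features only rearranges which subsets feed each filter, and the multi-channel extension only scales the weight count by $n_{ch}^{z-1}n_{ch}^{z}$, so neither alters the per-output triangle-inequality bound.

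The main point to watch is the quadratic amplification: each layer squares the previous bound, so $B_{\mathcal{Z}}$ is a nested composition of quadratics in $B_0$. The argument still closes because $\mathcal{Z}$ is finite, and finiteness is preserved through finitely many such compositions; the summability condition need only hold simultaneously for every $z$, which is exactly what \ref{sufficient_cond1} asserts. The only subtlety is therefore bookkeeping rather than analysis — were one to let $\mathcal{Z}\to\infty$, this nested squaring could diverge, and a stronger growth-controlled condition on the $S^k_z$ would be needed to guarantee stability.
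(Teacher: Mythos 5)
Your proposal is correct and follows essentially the same route as the paper's proof: a per-layer triangle-inequality bound of the form $A\sum|\bm{W^{z1}}| + A^2\sum|\bm{W^{z2}}|$, chained across the $\mathcal{Z}$ layers (the paper leaves the induction implicit, while you write it out and track the nested quadratic constants $B_z$ explicitly, which is a minor but welcome refinement). Your closing remark about divergence under $\mathcal{Z}\to\infty$ is a sensible observation the paper does not make, but it does not change the argument for finite $\mathcal{Z}$.
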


\begin{proof}
	Consider the $z^{th}$ layer in the Cascaded implementation of the Volterra Filter,
	\begin{equation}
		\begin{split}
			\bm{F}^z_{{[1:M_z]}} = \biggl[ g_z( \bm{F}^{z-1}_{t_{[1:L_z]}} ) \quad g_z( \bm{F}^{z-1}_{t_{[2:L_z+1]}}) ... g_z( \bm{F}^{z-1}_{t_{[(M_{z-1}) - L_z +1:(M_{z-1})]}})  \biggr],	
		\end{split}
	\end{equation} 
	where, $M_z = M_{z-1} - L_z + 1$. Then for $m_z \in \{1,...,M_z\}$,
	\begin{gather}
		\begin{align}
			&\left|\bm{F}^z_{{\left[ \substack{m_z\\ s_1\\ s_2} \right]}}\right| = \left|g_z\left(\bm{F}^{z-1}_{{\left[ \substack{m_{z-1}-L_z+1:m_z\\ s_1-p_1:s_1+p1\\ s_2-p_2:s_2:p_2} \right]}}\right)\right| \\
			&\nonumber = \Biggl| \sum_{\tau_1, \sigma_{11}, \sigma_{21}} \bm{W^{z1}}_{\left[ \substack{\tau_1\\ \sigma_{11}\\ \sigma_{21}} \right]}f^{z-1}_{\left[ \substack{(L_z + m_z - 1)-\tau_1\\ s_1 - \sigma_{11}\\ s_2 - \sigma_{21}} \right]}\\
			& + \sum_{\substack{\tau_1, \sigma_{11}, \sigma_{21}\\ \tau_2, \sigma_{12}, \sigma_{22}}} \bm{W^{z2}}_{\left[ \substack{\tau_1\\ \sigma_{11}\\ \sigma_{21}} \right] \left[ \substack{\tau_2\\ \sigma_{12}\\ \sigma_{22}} \right] }f^{z-1}_{\left[ \substack{(L_z + m_z - 1)-\tau_1\\ s_1 - \sigma_{11}\\ s_2 - \sigma_{21}} \right]} f^{z-1}_{\left[ \substack{(L_z + m_z - 1)-\tau_2\\ s_1 - \sigma_{12}\\ s_2 - \sigma_{22}} \right]} \Biggr|\\
			& \nonumber \leq  \sum_{\tau_1, \sigma_{11}, \sigma_{21}} \left|\bm{W^{z1}}_{\left[ \substack{\tau_1\\ \sigma_{11}\\ \sigma_{21}} \right]} \right| \left|f^{z-1}_{\left[ \substack{(L_z + m_z - 1)-\tau_1\\ s_1 - \sigma_{11}\\ s_2 - \sigma_{21}} \right]}\right|\\
			& + \sum_{\substack{\tau_1, \tau_2\\ \sigma_{11}, \sigma_{12}\\ \sigma_{21}, \sigma_{22}}} \left| \bm{W^{z2}}_{\left[ \substack{\tau_1\\ \sigma_{11}\\ \sigma_{21}} \right] \left[ \substack{\tau_2\\ \sigma_{12}\\ \sigma_{22}} \right] } \right| \left| f^{z-1}_{\left[ \substack{(L_z + m_z - 1)-\tau_1\\ s_1 - \sigma_{11}\\ s_2 - \sigma_{21}} \right]}\right| \left|f^{z-1}_{\left[ \substack{(L_z + m_z - 1)-\tau_2\\ s_1 - \sigma_{12}\\ s_2 - \sigma_{22}} \right]}\right|\\
			& \label{BIBO} \leq A \sum_{\tau_1, \sigma_{11}, \sigma_{21}} \left|\bm{W^{z1}}_{\left[ \substack{\tau_1\\ \sigma_{11}\\ \sigma_{21}} \right]} \right| + A^2 \sum_{\substack{\tau_1, \sigma_{11}, \sigma_{21}\\ \tau_2, \sigma_{12}, \sigma_{22}}} \left| \bm{W^{z2}}_{\left[ \substack{\tau_1\\ \sigma_{11}\\ \sigma_{21}} \right] \left[ \substack{\tau_2\\ \sigma_{12}\\ \sigma_{22}} \right] } \right|.
		\end{align}
	\end{gather}
	Note that Equation \ref{BIBO} states that a bounded input yields $\sum_{\tau_1, \sigma_{11}, \sigma_{21}} \left|f^{z-1}_{\left[ \substack{(L_z + m_z - 1)-\tau_1\\ s_1 - \sigma_{11}\\ s_2 - \sigma_{21}} \right]}\right|  \leq A$, for some $A<\infty$. Hence, the sufficient condition for the system to be BIBO stable is, 
	\begin{equation}
		\sum_{\tau_1, \sigma_{11}, \sigma_{21}} \left|\bm{W^{z1}}_{\left[ \substack{\tau_1\\ \sigma_{11}\\ \sigma_{21}} \right]} \right| + \sum_{\substack{\tau_1, \sigma_{11}, \sigma_{21}\\ \tau_2, \sigma_{12}, \sigma_{22}}} \left| \bm{W^{z2}}_{\left[ \substack{\tau_1\\ \sigma_{11}\\ \sigma_{21}} \right] \left[ \substack{\tau_2\\ \sigma_{12}\\ \sigma_{22}} \right] } \right| < \infty.
		\label{sufficient_cond}
	\end{equation}
	If the input data (i.e. video frames) is bounded, so is the output of each layer provided that Equation \ref{sufficient_cond} is satisfied $\forall z \in \{1,...,Z\}$, making the entire system BIBO stable. 
\end{proof}

\begin{prop}
	\label{convergence}
	An O-VNN is stable and convergent under the condition $|x_t| < \rho < \infty$,
	where $x_t$ is the input to the filter, and $\rho$ is the radius of convergence for the proposed Volterra Filter.
\end{prop}

\begin{proof}
	A Volterra Filter can be viewed as a power series,
	\begin{equation}
		y_t = \sum_{k=1}^K g_k[ax_t] = \sum_{k=1}^K a^k g_k[x_t],
		\label{Volterra_power_series}
	\end{equation}
	where $a$ is an amplification factor and, 
	\begin{equation}
		g_k[x_t] =  \sum_{\tau_1,...,\tau_k} \bm{W^k}_{[\tau_1,...,\tau_k]} x_{t-\tau_1}x_{t-\tau_2}...x_{t-\tau_k}.
	\end{equation}
	
	In general, for a power series $\sum_{k=1}^\infty c_k x^k$, converges only for $|x| < \rho$, where $\rho = (lim_{k \to \infty}$ $sup |c_k|^{1/k})^{-1}$ \cite{rudin1964principles}. 
	Setting $a=1$ in Equation \ref{Volterra_power_series}, and replacing the coefficients $c_k$ with the $k^{th}$ order Volterra Kernel $\bm{W^k}$,
	\begin{equation}
		\rho = (lim_{k \to \infty} sup |\bm{W^k}|^{1/k})^{-1}.
	\end{equation}
	Furthermore, since the system must also satisfy the BIBO stablity condition,
	\begin{equation}
		|x_t| < (lim_{k \to \infty} sup |\bm{W^k}|^{1/k})^{-1} < \infty.
	\end{equation}
\end{proof}

\subsection{Synthesis and Efficient Implementation of Volterra Kernels}
As noted earlier, the linear kernel ($1^{st}$ order) of the Volterra filter is similar to the convolutional layer in the conventional CNNs. As a result, it can easily be implemented using the 3D convolution function in Tensorflow \cite{abadi2016tensorflow}. The second order kernel may be approximated as a product of two 3-dimensional matrices (i.e. a seperable operator), 
\begin{equation}
	\bm{W^2}_{L \times P_1 \times P_2 \times L \times P_1 \times P_2} = \sum_{q=1}^{Q} \bm{W^{2}}_{a_{q_{L \times P_1 \times P_2 \times 1}}} \bm{W^{2}}_{b_{q_{1 \times L \times P_1 \times P_2}}},
	\label{approx}
\end{equation}
where, $P_1=2p_1+1$, and $P_2=2p_2+1$, and $p_1, p_2$ specify the spatial translations (horizontal and vertical). Accounting for Equation \ref{3dvf} yields,
\begin{gather}
	\begin{align} 
		&\nonumber g\left(\bm{X}_{\left[ \substack{t-L+1:t \\ s_1 - p_1 : s_1 + p_1\\ s_2 - p_2 : s_2 + p_2} \right]}\right) = \sum_{\tau_1, \sigma_{11}, \sigma_{21}} \bm{W}^{\bm{1}}_{\left[\substack{\tau_1 \\\sigma_{11} \\\sigma_{21}}\right]} x_{\left[\substack{t - \tau_1 \\s_1 - \sigma_{11} \\s_2 - \sigma_{21}}\right]} \\
		& + \sum_{\substack{\tau_1, \sigma_{11}, \sigma_{21}\\ \tau_2, \sigma_{12}, \sigma_{22}}} \sum_{q=1}^{Q} \bm{W}^{{\bm{2}}}_{a_{q_{\left[\substack{\tau_1 \\\sigma_{11} \\\sigma_{21}}\right]}}} \bm{W}^{{\bm{2}}}_{b_{q_{\left[\substack{\tau_2 \\\sigma_{12} \\\sigma_{22}}\right]}}} x_{\left[\substack{t - \tau_1 \\s_1 - \sigma_{11} \\s_2 - \sigma_{21}}\right]} x_{\left[\substack{t - \tau_2 \\s_1 - \sigma_{12} \\s_2 - \sigma_{22}}\right]}\\
		& \nonumber = \sum_{\tau_1, \sigma_{11}, \sigma_{21}} \bm{W}^{\bm{1}}_{\left[\substack{\tau_1 \\\sigma_{11} \\\sigma_{21}}\right]} x_{\left[\substack{t - \tau_1 \\s_1 - \sigma_{11} \\s_2 - \sigma_{21}}\right]}\\ 
		& + \sum_{q=1}^{Q} \sum_{\substack{\tau_1, \sigma_{11}, \sigma_{21}\\ \tau_2, \sigma_{12}, \sigma_{22}}} \biggl(\bm{W}^{{\bm{2}}}_{a_{q_{\left[\substack{\tau_1 \\\sigma_{11} \\\sigma_{21}}\right]}}} x_{\left[\substack{t - \tau_1 \\s_1 - \sigma_{11} \\s_2 - \sigma_{21}}\right]} \biggr) \biggl( \bm{W}^{{\bm{2}}}_{b_{q_{\left[\substack{\tau_2 \\\sigma_{12} \\\sigma_{22}}\right]}}} x_{\left[\substack{t - \tau_2 \\s_1 - \sigma_{12} \\s_2 - \sigma_{22}}\right]}\biggr).
	\end{align}
\end{gather}
A larger $Q$ will provide a better approximation of the $2^{nd}$ order kernel.
The advantage of this class of approximation is two-fold. Firstly, the number of parameters can further be reduced, if for the $z^{th}$ layer, $(L_z \cdot [2p_{1_z}+1] \cdot [2p_{2_z}+1])^2 > 2Q(L_z \cdot [2p_{1_z}+1] \cdot [2p_{2_z}+1])$. A trade-off between performance and acceptable computational complexity must be accounted for when adopting such an approximation. Additionally, this makes it easier to implement the higher order kernels in Tensorflow \cite{abadi2016tensorflow} by using the built in convolutional operator.

The complexity of the approximate quadratic layers in the Cascaded Volterra Filter (see Figure \ref{bd_vf}) is reflected by the number of parameters as
\begin{gather}
	\begin{align}
		& \sum_{z=1}^{\mathcal{Z}}\biggl[ (L_z \cdot [2p_{1_z}+1] \cdot [2p_{2_z}+1]) + 2Q(L_z \cdot [2p_{1_z}+1] \cdot [2p_{2_z}+1]) \biggr].
	\end{align}
\end{gather}

\begin{prop}
	\label{qth_rank}
	The VNN second order kernel approximation in Equation \ref{approx} is a $Q^{th}$ rank approximation of the exact quadratic kernel $\bm{W^2}$.
\end{prop}
\begin{proof}
	For simplicity, consider a 1-D Volterra Filter with memory $L$. The quadratic weight matrix, $\bm{W^2}$ in such a case is of size $L \times L$, and Equation (\ref{approx}) becomes, $\bm{W^{2(Q)}}_{L \times L} = \sum_{q=1}^{Q} \bm{W^{2}}_{a_{q_{L \times 1}}} \bm{W^{2}}_{b_{q_{1 \times L}}}$.
	Consider the Singular Value Decomposition of the quadratic weight matrix, $\bm{W^2}$,
	\begin{equation}
		\bm{W^2} = \bm{U}\bm{\Sigma}\bm{V}^T,
		\label{svd}
	\end{equation}
	where, $\bm{U}$ and $\bm{V}$ are $L \times L$ matrices, and $\bm{\Sigma}$ is a diagonal matrix with singular values on the diagonal. Equation (\ref{svd}) can be re-written as,
	\begin{equation}
		\bm{W^2} = \sum_{q=1}^L u_q \sigma_q v_q^T,
	\end{equation} 
	where, $u_q$ and $v_q$ are the $q^{th}$ column of $\bm{U}$ and $\bm{V}$ respectively, and $\sigma_q$ is the $q^{th}$ diagonal element of $\bm{\Sigma}$. 
	A $Q^{th}$ rank approximation is then given as,
	\begin{gather}
		\begin{align}
			&\nonumber \bm{W^{2(Q)}} = \sum_{q=1}^Q u_q \sigma_q v_q^T\\
			& \quad \quad \quad = \sum_{q=1}^Q \hat{u}_q v_q^T,
		\end{align}
	\end{gather}
	where, $\hat{u}_q = u_q.\sigma_q$. If $\bm{W^2}_{a_q} = \hat{u}_q$ and $\bm{W^2}_{b_q} = v_q^T$,
	\begin{equation}
		\bm{W^{2(Q)}} = \sum_{q=1}^Q \bm{W^2}_{a_q} \bm{W^2}_{b_q},
	\end{equation}
	hence, confirming the approximation given in Equation \ref{approx} as a $Q^{th}$ rank approximation of the exact quadratic kernel.
\end{proof}
The matrices $\bm{W^2}_{a_q}$ and $\bm{W^2}_{b_q}$ are unknown beforehand, and will be learned as part of the training process driven by the classification performance of the system. 
We will evaluate and compare both approaches when implementing the second order kernel (i.e. approximation and exact method) in Section \ref{expt_res}. Figure \ref{approx_impl} illustrates the implementation of a $2^{nd}$ order filter using a $Q^{th}$ rank approximation.
\begin{figure}[h!]
	\centering
	\includegraphics[width=0.7\textwidth]{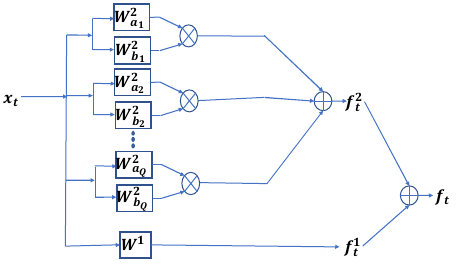}
	\caption{Implementation of a second order Volterra Filter using $Q^{th}$ rank approximation}
	\label{approx_impl}
\end{figure}

\subsection{Relation to LSTM Networks}
In this section we compare the proposed Volterra Neural Networks with LSTMs, which are frequently used for processing/analyzing temporal data.
An LSTM Network cell implemented as discussed in Section \ref{LSTM_rel_work}, can be shown to be a special case of the Volterra Filter.  

\paragraph*{CASE-1: No Activation Functions}
\begin{prop}
	The cell state of an LSTM network at time $t$, $C_{t}$ is a special case of a $2^{nd}$ order Volterra Filter, where the $2^{nd}$ order filter is approximated using $Q=1$ in Equation \ref{approx}, and weighed by the cell state at time $t-1$, $C_{t-1}$, i.e.
	\begin{equation}
		\label{C_t}
		C_t = C_{t-1}(\sum_{j=1}^2 \bm{W}_f^j s_j) + \sum_{j,k=1}^2 \bm{W}_{ic}^{jk} s_js_k,
	\end{equation}
	where, $\bm{s} = [h_{t-1}, x_t]$, and $\bm{W}_{ic} = \bm{W}_i \cdot \bm{W}_c$.
\end{prop}
\begin{proof}
	From Equations (\ref{forget_gate}), (\ref{input_gate}), (\ref{h_t}),
	At time t=0,
	\begin{equation}
		C_0 = h_0 = \bm{W}_0x_0.
	\end{equation}
	At time t=1,
	\begin{gather}
		\begin{align}
			&C_1 = (W_f^1h_0 + W_f^2x_1)(h_0) + (W_i^1h_0+ W_i^2x_1) (W_c^1h_0 + W_c^2x_1)\\
			& \quad = W_f^1h_0^2 + W_f^2x_1h_0 + W_i^1h_0.W_c^1h_0 + W_i^1h_0.W_c^2x_1 + W_i^2x_1.W_c^1h_0 + W_i^2x_1.W_c^2x_1.
		\end{align}
	\end{gather}
	Define the matrix $\bm{W}_{ic}$ such that, $\bm{W}_{ic} = \bm{W}_i \cdot \bm{W}_c$. 
	This leads to,
	\begin{gather}
		\begin{align}
			& C_1 = W_f^1h_0^2 + W_f^2x_1h_0 + W_{ic}^{11}h_0^2 + W_{ic}^{12}h_0x_1 + W_{ic}^{21}x_1h_0 + W_{ic}^{22}x_1^2\\
			& \quad = (W_f^1+W_{ic}^{11})h_0^2 + (W_f^2 + W_{ic}^{21}) x_1h_0 + W_{ic}^{21} h_0x_1 + W_{ic}^{22} x_1^2.
		\end{align}
	\end{gather}
	Note that the filter $\bm{W}_{ic}$ is equivalent to a $1^{st}$ rank approximation (Q=1) of the second order kernel of a Volterra Filter ($\bm{W}^2 = \sum_{q=1}^Q \bm{W}^{2}_{qa_{L \times 1}}\bm{W}^{2}_{qb_{1 \times L}}$). This means that the LSTM can be considered to be a special case of the Volterra Filter where a $1^{st}$ rank approximation of the $2^{nd}$ order kernel is used.
	
	At time t, 
	\begin{gather}
		\begin{align}
			& C_t = W_f^1h_{t-1}C_{t-1} + W_f^2x_tC_{t-1} + W_{ic}^{11} h_{t-1}^2 + W_{ic}^{12}h_{t-1}x_t + W_{ic}^{21}x_th_{t-1} + W_{ic}^{22}x_t^2 \\
			& \quad = C_{t-1}(\sum_{j=1}^2 \bm{W}_f^j s_j) + \sum_{j,k=1}^2 \bm{W}_{ic}^{jk} s_js_k,
		\end{align}
	\end{gather}
	where, $\bm{s} = [h_{t-1}, x_t]$, and $\bm{W}_{ic} = \bm{W}_i \cdot \bm{W}_c$. This formulation is equivalent to a $2^{nd}$ order Volterra Filter applied on the input $\bm{s}$, with the linear kernel $\bm{W}_f$ and quadratic kernel $\bm{W}_{ic}$.
\end{proof}
\begin{prop}
	The output $h_t$ of the LSTM network at time $t$, is a special case of the $3^{rd}$ order Volterra Filter, where the $2^{nd}$ and $3^{rd}$ order filters are approximated using $Q=1$ in Equation (\ref{approx}), and weighed by the cell state at time $t-1$, $C_{t-1}$, i.e.
	\begin{equation}
		h_t = C_{t-1}(\sum_{j,k=1}^2 \bm{W}^{jk}_{of} s_js_k) + \sum_{j,k,l=1}^2 \bm{W}^{jkl}_{oic} s_js_ks_l,
	\end{equation}
	where, $\bm{s} = [h_{t-1}, x_t]$, $\bm{W}_{of} = \bm{W}_o \cdot \bm{W}_f$, and $\bm{W}_{oic} = \bm{W}_o \cdot \bm{W}_i \cdot \bm{W}_c$.
\end{prop}
\begin{proof}
	Based on Equation (\ref{h_t}), the LSTM network output, $h_t$ can be written as,
	\begin{equation}
		h_t = (W_o^1h_{t-1} + W_o^2x_t)C_t.
	\end{equation} 
	Replacing $C_t$ by its expression in Equation (\ref{C_t}), and using $\bm{W}_{of} = \bm{W}_o \cdot \bm{W}_f$ and $\bm{W}_{oic} = \bm{W}_o \cdot \bm{W}_i \cdot \bm{W}_c$, 
	\begin{gather}
		\begin{align}
			&\nonumber h_t = W_{of}^{11} h_{t-1}^2 C_{t-1} + W_{of}^{12}h_{t-1}x_tC_{t-1} + W_{of}^{21}x_th_{t-1}C_{t-1} + W_{of}^{22} x_t^2C_{t-1} + W_{oic}^{111}h_{t-1}^3 + W_{oic}^{112}h_{t-1}^2x_t \\ 
			&  \quad \quad + W_{oic}^{121} h_{t-1}^2 + W_{oic}^{122}h_{t-1}x_t^2 + W_{oic}^{211}h_{t-1}^2x_t + W_{oic}^{212} h_{t-1}x_t^2 + W_{oic}^{221}h_{t-1}x_t^2 + W_{oic}^{222}x_t^3. 
		\end{align}
	\end{gather}
	The above expression can be re-written as,
	\begin{gather}
		\begin{align}
			&h_t = C_{t-1}(\sum_{j,k=1}^2 \bm{W}^{jk}_{of} s_js_k) + \sum_{j,k,l=1}^2 \bm{W}^{jkl}_{oic} s_js_ks_l,
		\end{align}
	\end{gather}
	where, $\bm{s} =[h_{t-1}, x_t]$. 
\end{proof}
On the other hand, for an O-VNN with $\mathcal{Z}$ layers,
\begin{gather}
	\begin{split}
		h_{t_{m_\mathcal{Z}}}^{\mathcal{Z}} = \sum_{j=0}^{L_{\mathcal{Z}}} W^{\mathcal{Z}1}_{j} h_{(m_\mathcal{Z}+L_\mathcal{Z})-j}^{\mathcal{Z}-1} + \sum_{j,k=0}^{L_\mathcal{Z}} W^{\mathcal{Z}2}_{j,k} \text{ }h_{(m_z+L_z)-j}^{\mathcal{Z}-1} h_{(m_z+L_z)-k}^{\mathcal{Z}-1},
	\end{split}
\end{gather}
where, $m_\mathcal{Z} \in [1:M_\mathcal{Z}]$, $M_\mathcal{Z} = M_{\mathcal{Z}-1}-L_{\mathcal{Z}}+1$, and $M_1 = t-L_1+1$.
In both scenarios, the system uses higher order relations between current and previous samples. In case of an LSTM network the cell state at $t-1$, $C_{t-1}$ is used in order to select features from previous frames that may be relevant to current frames. On the other hand the Volterra filter formulation explicitly selects the interactions between the frames and weighs them accordingly. 

\paragraph*{CASE-2: With Activation Functions}
The sigmoid activation function can be approximated as a taylor series,
\begin{equation}
	\sigma(x) = \frac{1}{1+e^{-x}} = \frac{1}{2} + \frac{1}{4}x - \frac{1}{48}x^3 + \frac{1}{480}x^5 -...
\end{equation}
This can be learned by a Volterra Network if required as it is a polynomial expression. Furthermore as long as the condition in Proposition \ref{convergence} is satisfied, the series is convergent.  
Similarly, a $\text{tanh}$ activation is also approximated by using a Taylor Series,
\begin{equation}
	\text{tanh}(x) = x - \frac{1}{3}x^3 + \frac{2}{15}x^5 - \frac{17}{315}x^7 + ...
\end{equation}
This means there is no longer a need to explicitly define the activation function, as the Volterra Neural Network will learn the required activation function as part of the learning/training process.

\section{Min-Norm Solution and Risk Analysis of a $2^{nd}$ Order Filter}
Given the vectorized $i^{th}$ data sample, $\bm{x}_{i_{1 \times d}} = \{x_i^1,x_i^2,...,x_i^d\}$, with its associated second order terms found by $\bm{x}_i^T\bm{x}_i$, provide a relevant data sample rewritten as,
\begin{equation}
	\bm{\hat{x}}_{i_{(d+d^2) \times 1}} = [\bm{x}_i, \{ vec(\bm{x}_i^T\bm{x}_i) \}].
\end{equation}
Consider $N$ such samples, i.e. $i \in \{1,...,N\}$, which will be used for training of the system, this leads to the data matrix, $\bm{\hat{X}}_{N \times (d+d^2)} = \{ \bm{\hat{x}}_i^T \}_{i \in \{ 1,...,N \}}$.

Now consider, $\bm{\hat{W}}_{(d+d^2) \times 1} = [\bm{W}^{1}_{1 \times d}, \bm{W}^{2}_{1 \times d^2}]^T$ where, $\bm{W}^1$ and $\bm{W}^2$ are the Volterra Filter weights.

If $\bm{Y}_{N \times 1}$ is the ground truth, we wish to find $\bm{\hat{W}}$ such that,
\begin{equation}
	\bm{\hat{X}}_{N \times (d+d^2) } \bm{\hat{W}}_{(d+d^2) \times 1} = \bm{Y}_{N \times 1}.
\end{equation}
The min-norm solution to the filter weights in this case turns out to be,
\begin{equation}
	\bm{\hat{W}} = \bm{\hat{X}}^T(\bm{\hat{X}}\bm{\hat{X}}^T)^{-1}\bm{Y},
\end{equation}
where, $\bm{\hat{X}}^T(\bm{\hat{X}}\bm{\hat{X}}^T)^{-1}$ is the pseudo-inverse of $\bm{\hat{X}}$, and $\bm{\hat{X}}\bm{\hat{X}}^T$ is assumed to have a full rank so that it is invertible.
\paragraph*{Risk Analysis}
The risk of the min-norm estimate, $\bm{\hat{W}}$ can be computed as discussed in \cite{bartlett2019benign},
\begin{gather}
	\begin{align}
		&R(\bm{\hat{W}})\text{ }= {\rm I\!E}_{\hat{x},y}(y-\hat{x}^T\bm{\hat{W}})^2 - {\rm I\!E}(y-\hat{x}^T\bm{\hat{W}}^\star)^2 \\
		& \quad \quad \quad = {\rm I\!E}_{\hat{x},y} (y - \hat{x}^T \bm{\hat{W}}^\star + \hat{x}^T(\bm{\hat{W}}^\star - \bm{\hat{W}}))^2 - {\rm I\!E}(y-\hat{x}^T\bm{\hat{W}}^\star)^2 \\
		&\quad \quad \quad = {\rm I\!E}_{\hat{x}} (\hat{x}^T (\bm{\hat{W}}^\star - \bm{\hat{W}}))^2,
	\end{align}
\end{gather}
where $\bm{\hat{W}^\star}$ is the least squares estimate, and ${\rm I\!E}$ is the expectation operator with respect to the observed data and the ground truth. Since $\bm{\hat{W}} = \bm{\hat{X}}^T(\bm{\hat{X}}\bm{\hat{X}}^T)^{-1}\bm{Y}$, 
\begin{equation}
	R(\bm{\hat{W}}) = {\rm I\!E}_{\hat{x}} {( \hat{x}^T(\bm{\hat{W}}^\star - \bm{\hat{X}}^T(\bm{\hat{X}}\bm{\hat{X}}^T)^{-1}\bm{Y}) )}
\end{equation}

Furthermore, considering distortion due to noise, $\epsilon = \bm{Y} - \bm{\hat{X}}\bm{\hat{W}}^\star$,

\begin{gather}
	\begin{align}
		& R(\bm{\hat{W}}) = {\rm I\!E}_{\hat{x}} ( \hat{x}^T(\bm{I} - \bm{\hat{X}}^T (\bm{\hat{X}}\bm{\hat{X}}^T)^{-1}))\bm{\hat{W}}^\star {- \hat{x}^T\bm{\hat{X}}^T (\bm{\hat{X}}\bm{\hat{X}}^T)^{-1}\epsilon) } ^2 \\
		& \quad \quad \quad  \leq 2 {\rm I\!E}_{\hat{x}} (\hat{x}^T(\bm{I} - \bm{\hat{X}}^T (\bm{\hat{X}}\bm{\hat{X}}^T)^{-1}))\bm{\hat{W}}^\star)^2 + 2 {\rm I\!E}_{\hat{x}} (\hat{x}^T\bm{\hat{X}}^T (\bm{\hat{X}}\bm{\hat{X}}^T)^{-1}\epsilon)^2\\
		& \quad \quad \quad = 2\bm{\hat{W^\star}}^T \bm{B} \bm{\hat{W^\star}} + 2\epsilon^T \bm{C} \epsilon, \label{risk}
	\end{align}
\end{gather}
where, $\bm{B} = (\bm{I} - \bm{\hat{X}}^T (\bm{\hat{X}}\bm{\hat{X}}^T)^{-1})\bm{\hat{\Sigma}}(\bm{I} - \bm{\hat{X}}^T (\bm{\hat{X}}\bm{\hat{X}}^T)^{-1})$, $\bm{\hat{\Sigma}} = \bm{\hat{x}}\bm{\hat{x}}^T$
and $\bm{C} = (\bm{\hat{X}}\bm{\hat{X}}^T)^{-1}\bm{\hat{X}} \bm{\hat{\Sigma}}\bm{\hat{X}}^T (\bm{\hat{X}}\bm{\hat{X}}^T)^{-1}.$

Equation (\ref{risk}) provides an upper bound for the excess risk when using the min-norm estimate $\bm{\hat{W}}$, in terms of the observed data, the optimal least-squares estimate $\bm{\hat{W}^\star}$ and the distortion due to noise, $\epsilon$, thus, showing the generalizability of this strategy.

\subsection{Two-Stream Volterra Networks}
\label{Two-Stream}
Most previous studies in action recognition in videos have noted the importance of using both the spatial and the temporal information for an improved recognition accuracy. As a result, we also propose the use of Volterra filtering in combining the two information streams, exploring a potential non-linear relationship between them. In Section \ref{expt_res} we will verify that this actually boosts the performance, thereby indicating some inherent non-linear relation between the two information streams. Independent Cascaded Volterra Filters are first used in order to extract features from each modality,
\begin{gather}
	\begin{align}
		\bm{F}_{[1:M_\mathcal{Z}]}^{\mathcal{Z}^{RGB}} = g_{\mathcal{Z}}^{RGB}(... g_2^{RGB}(g_1^{RGB}(\bm{X}^{RGB}_{[t-L+1:t]})))\\
		\bm{F}_{[1:M_\mathcal{Z}]}^{\mathcal{Z}^{OF}} = g_{\mathcal{Z}}^{OF}(... g_2^{OF}(g_1^{OF}(\bm{X}^{OF}_{[t-L+1:t]}))).
	\end{align}
\end{gather}
Upon gleaning features from the two streams, an additional Volterra Filter is solely used for combining the generated feature maps from both modalities,
\begin{gather}
	\begin{align}
		& \nonumber F_t^{(RGB+OF)} = \sum_{\tau_1,\sigma_{11}, \sigma_{21}, u_1} \bm{W^1}_{\left[ \substack{\tau_1\\ \sigma_{11} \\ \sigma_{21} \\ u_1} \right]} f_{\left[ \substack{M_\mathcal{Z}-\tau_1\\ s_1 - \sigma_{11}\\ s_2 - \sigma_{21}} \right]}^{\mathcal{Z}^{u_1}}\\
		& + \sum_{\substack{\tau_1,\sigma_{11}, \sigma_{21}, u_1\\ \tau_2,\sigma_{12}, \sigma_{22}, u_2}} \bm{W^2}_{\left[ \substack{\tau_1\\ \sigma_{11}\\ \sigma_{21}} \right] \left[ \substack{\tau_2\\  \sigma_{12}\\ \sigma_{22}} \right]} f_{\left[ \substack{M_\mathcal{Z}-\tau_1\\ s_1 - \sigma_{11}\\ s_2 - \sigma_{21}} \right]}^{\mathcal{Z}^{u_1}} f_{\left[ \substack{M_\mathcal{Z}-\tau_2\\ s_1 - \sigma_{12}\\ s_2 - \sigma_{22}} \right]}^{\mathcal{Z}^{u_2}},
	\end{align}
\end{gather}
where $\tau_j \in [0, L_{\mathcal{Z}+1}]$, $\sigma_{1j} \in [-p_1, p_1]$, $\sigma_{2j} \in [-p_2, p_2]$, and $u_j \in [RGB, OF]$.
Figure \ref{Fusion}-(c) shows the block diagram for fusing the two information streams. 

\section{Experiments and Results}
\label{expt_res}
\subsection{Action Recognition}
We proceed to evaluate the performance of this approach on three action recognition datasets, namely, Kinetics-400 \cite{carreira2017quo}, UCF-101 \cite{soomro2012ucf101} and HMDB-51 \cite{kuehne2011hmdb}. We present two versions of the VNN, a heavier complex version: O-VNN-H and a lighter, device friendly version: O-VNN-L. 
The performance comparison of the results with recent state of the art implementations on Kinetics-400 is presented in Table \ref{Kinetics-Perf}, and the comparisons on UCF-101 and HMDB-51 are presented in Tables \ref{acc_table_RGB} and \ref{acc_table_RGBOF}. 
Table \ref{acc_table_RGB} shows the comparison with techniques that only exploit the RGB stream, while Table \ref{acc_table_RGBOF} shows the comparison when both information streams are used. Note that our comparable performance to the state of the art is achieved with a significantly lower number of parameters (see Table \ref{Kinetics-Perf}). Furthermore, a significant boost in performance is achieved by allowing non-linear interaction between the two information streams. The Optical Flow is computed using the TV-L1 algorithm \cite{zach2007duality}. Note that we train the network from scratch on both datasets, and do not use a larger dataset for pre-training, in contrast to some of the previous implementations. 
The implementations that take advantage of a different dataset for pre-training are indicated by a \textit{`Y'} in the pre-training column, while those that do not, are indicated by \textit{`N'}. When training from scratch the proposed solution is able to achieve best performance for both scenarios: one stream networks (RGB frames only) and two-stream networks (RGB frames \& Optical Flow).
To fuse the two information streams (spatial and temporal), we evaluate the following techniques: 
\begin{enumerate}
	\item \textit{Decision Level Fusion (Figure \ref{Fusion}-(a)):} The decision probabilities $P_t^{RGB}(a_i)$ and $P_t^{OF}(a_i)$ are independently computed and are combined to determine the fused probability $P_t^f(a_i)$ using \textit{(a) Weighted Averaging: } $P_t^f(a_i) = \beta^{RGB} P_t^{RGB}(a_i) + \beta^{OF} P_t^{OF}(a_i)$, where $\beta^{RGB} + \beta^{OF} = 1$, which control the importance/contribution of the RGB and Optical Flow streams towards making a final decision, or \textit{(b) Event Driven Fusion \cite{roheda2018decision,roheda2019event}: } $P_t^f(a_i) = \gamma P_t^{\text{MAX MI}}(a_i^{RGB}, a_i^{OF}) + (1-\gamma) P_t^{\text{MIN MI}}(a_i^{RGB}, a_i^{OF})$, where $\gamma$ is a pseudo measure of correlation between the two information streams, $P_t^{\text{MAX MI}}(.)$ is the joint distribution with maximal mutual information, and $P_t^{\text{MIN MI}}(.)$ is the joint distribution with minimal mutual information.
	\item \textit{Feature Level Fusion:} Features are extracted from each stream independently, and are subsequently merged before making a decision. For this level of fusion we consider a simple \textit{Feature Concatenation} (see Figure \ref{Fusion}-(b)), and \textit{Two-Stream Volterra Filtering} (see Section \ref{Two-Stream}, Figure \ref{Fusion}-(c)). 
\end{enumerate}

\begin{figure*}[tbp] 
	\centering
	\includegraphics[width = 0.97\textwidth]{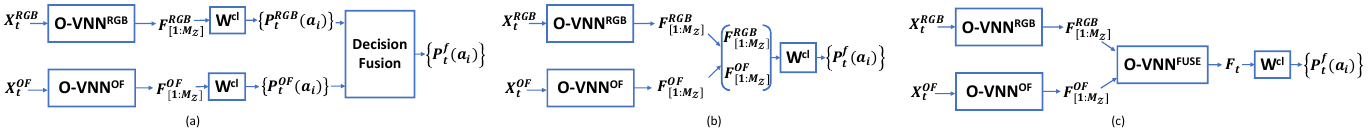}
	\caption{(a): Decision Level Fusion, (b): Feature Concatenation, (c): Two-Stream Volterra Filtering}
	\label{Fusion}
\end{figure*}

\begin{figure}[h]
	\centering
	\includegraphics[width=0.48\textwidth]{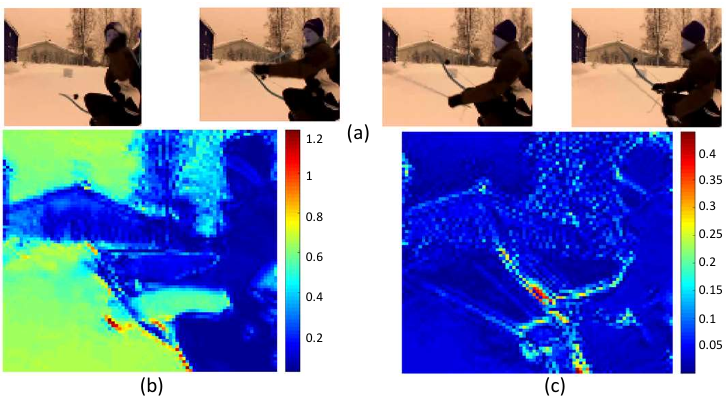}
	\caption{(a): Input Video, (b): Features extracted by only RGB stream, (c): Features extracted by Two-Stream Volterra Filtering}
	\label{Feats_0}
\end{figure}
\begin{figure}[h]
	\centering
	\includegraphics[width=0.48\textwidth]{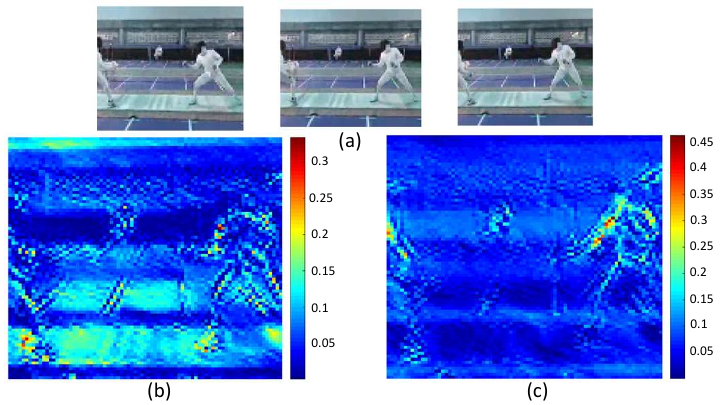}
	\caption{(a): Input Video, (b): Features extracted by only RGB stream, (c): Features extracted by Two-Stream Volterra Filtering}
	\label{Feats_1}
\end{figure}

\begin{table*}[tbp]
	\begin{center}
		\begin{tabular}{|M{7.8cm}|M{3.87cm}|M{2cm}| M{1.87cm}|}
			\hline
			\textbf{Method} & \textbf{Avg Accuracy Kinetics-400} & \textbf{Number of Parameters} & \textbf{GFLOPs} \\
			\hline
           	Temporal 3D ConvNets \cite{diba2017temporal} & 69.8 \% & 56M & 63.34 \\
			\hline
			SpatioTemporal Convolution \cite{tran2018closer} & 71.3 \% & 34M & 43.72 \\
			\hline
			Two-Stream Inflated 3D CNN \cite{carreira2017quo} & 74.2 \% & 25M & 37.43\\
			\hline
			Slowfast ResNet-50 \cite{feichtenhofer2019slowfast} & 77.0 \% & 34M & 50.58 \\
			\hline
			Slowfast ResNet-101 \cite{feichtenhofer2019slowfast} & \textbf{78.5 \%} & 62M & 96.79 \\
			\hline
			Two Stream O-VNN-L & 75.9 \% & \underline{\textbf{10M}} & \underline{\textbf{12.30}} \\
			\hline
			\textbf{Two Stream O-VNN-H} & \underline{\textbf{78.8} \%} & \textbf{26M} & \textbf{29.85} \\
			\hline
			
		\end{tabular}
	\end{center}
	\caption{Performance Evaluation for Two-Stream networks (RGB \& Optical Flow) on Kinetics-400}
	\label{Kinetics-Perf}
\end{table*}

\begin{table*} [tbp]
	\begin{center}	
		\begin{tabular}{|M{7.8cm}|M{4.0cm}|M{1.87cm}| M{1.87cm}|}
			\hline
			\textbf{Method} &\textbf{Pre-Training} & \textbf{Avg Accuracy UCF-101} & \textbf{Avg Accuracy HMDB-51}\\
			\hline
			Slow Fusion \cite{karpathy2014large} & \textit{Y} \small (Sports-1M) & 64.1 \% & -  \\
			\hline
			Deep Temporal Linear Encoding Networks \cite{diba2017deep} & \textit{Y} \small (Sports-1M) & 86.3 \% & 60.3 \% \\
			\hline
			Inflated 3D CNN \cite{carreira2017quo} & \textit{Y} \small (ImageNet + Kinetics) & \textbf{95.1 \%} & \textbf{74.3 \%} \\
			\hline
			O-VNN-H & \textit{Y} (Kinetics) & \textbf{95.3 \%} & \textbf{75.1 \%}\\
			\hline
			\hline
			Soomro et al, \citeyear{soomro2012ucf101} & \textit{N} & 43.9 \% & - \\ 
			\hline
			Single Frame CNN \cite{karpathy2014large,krizhevsky2012imagenet} & \textit{N} & 36.9 \% & - \\
			\hline
			Slow Fusion (\citeauthor{karpathy2014large,baccouche2011sequential,ji20123d}) & \textit{N} & 41.3 \% & - \\
			\hline
			3D-ConvNet \cite{carreira2017quo,tran2015learning} & \textit{N} & 51.6 \% & 24.3 \%\\
			\hline
			Volterra Filter & \textit{N} &  38.19 \% &  18.76 \% \\
			\hline
			O-VNN-H & \textit{N} &  \textbf{58.73 \%} &  \textbf{29.33 \%} \\
			\hline
			O-VNN-L & \textit{N} &  53.77 \% & 25.76 \% \\
			\hline 
		\end{tabular}
	\end{center}
	\caption{Performance Evaluation for one stream networks (RGB only): The proposed algorithm achieves best performance when trained from scratch}
	\label{acc_table_RGB}
	
\end{table*}

\begin{table*} [tbp]
	\begin{center}	
		\begin{tabular}{|M{7.3cm}|M{4.2cm}|M{1.87cm}| M{1.87cm}|}
			\hline
			\textbf{Method} &\textbf{Pre-Training} & \textbf{Avg Accuracy UCF-101} & \textbf{Avg Accuracy HMDB-51}\\
			\hline
			Two-Stream CNNs \cite{simonyan2014two} & \textit{Y} (ILSVRC-2012) & 88.0 \% & 72.7 \% \\
			\hline
			Deep Temporal Linear Encoding Networks \cite{diba2017deep} & \textit{Y} \small (BN-Inception + ImageNet) & 95.6 \% & 71.1 \% \\
			\hline
			Two Stream Inflated 3D CNN \cite{carreira2017quo} & \textit{Y} \small (ImageNet + Kinetics) & 98.0 \% & 80.9 \%  \\
			\hline
			Two-Stream O-VNN-H & \textit{Y} (Kinetics) & \textbf{98.49 \%} & \textbf{82.63 \%}\\
			\hline
			\hline
			Two Stream Inflated 3D CNN \cite{carreira2017quo} & \textit{N} & 88.8 \% & 62.2 \% \\
			\hline
			Weighted Averaging: O-VNN-H  & \textit{N} & 85.79 \% & 59.13 \% \\ 
			\hline
			Weighted Averaging: O-VNN-L  & \textit{N} & 81.53 \% & 55.67 \%\\
			\hline
			Event Driven Fusion: O-VNN-H & \textit{N} & 85.21 \% & 60.36 \%\\ 
			\hline
			Event Driven Fusion: O-VNN-L & \textit{N} & 80.37 \% & 57.89 \% \\ 
			\hline
			Feature Concatenation: O-VNN-H & \textit{N} & 82.31 \% & 55.88 \%\\ 
			\hline
			Feature Concatenation: O-VNN-L & \textit{N} & 78.79 \% & 51.08 \%\\ 
			\hline
			Two-Stream O-VNN-H & \textit{N} & \textbf{90.28 \%} & \textbf{ 65.61 \%}\\ 
			\hline
			Two-Stream O-VNN-L & \textit{N} & 86.16 \%  & 62.45 \% \\
			\hline
		\end{tabular}
	\end{center}
	\caption{Performance Evaluation for two stream networks (RGB \& Optical Flow): The proposed algorithm achieves best performance on both datasets.}
	\label{acc_table_RGBOF}
\end{table*}
The techniques are summarized in Figure \ref{Fusion}. The complex version of the proposed approach (O-VNN-H) comprises of 7 $2^{nd}$ order layers in both the RGB and the Optical stream. Each layer uses $L_z = 2$ and $p_{1_z}, p_{2_z} \in \{0,1,2\}$. The outputs of the RGB filter and the Optical Flow filter are then fed to a fusion layer with $L_{\text{Fuse}} = 2$ and $p_{1_{\text{Fuse}}}, p_{2_{\text{Fuse}}} \in \{0,1,2\}$. Similarly, a lighter version of the proposed method (O-VNN-L) is designed with 5 $2^{nd}$ order layers. 
It is clear from Table \ref{acc_table_RGBOF} that performing fusion using Volterra Filters significantly boosts the performance of the system. This shows that there does exist a non-linear relationship between the two modalities. This can also be confirmed from the fact that we can see significant values in the weights for the fusion layer (see Table \ref{Wt_Norm}).
\begin{table}[h]
	\begin{center}
		\begin{tabular}{|M{1.5cm}|M{1.5cm}|M{1.4cm}|M{2cm}|}
			\hline
			\small  & $u=RGB$ & $u=OF$ & $u=Fusion$ \\
			\hline
			$\frac{1}{2}{\left\| \bm{W}^{u} \right\|}_2^2$ & 352.15 & 241.2 & 341.3\\   
			\hline
		\end{tabular}
	\end{center}
	\caption{Norm of $\bm{W}^{u}$, where $u \in [RGB,OF,Fusion]$.}
	\label{Wt_Norm}
\end{table}
\begin{figure}[h]
	\centering
	\includegraphics[width=0.8\textwidth]{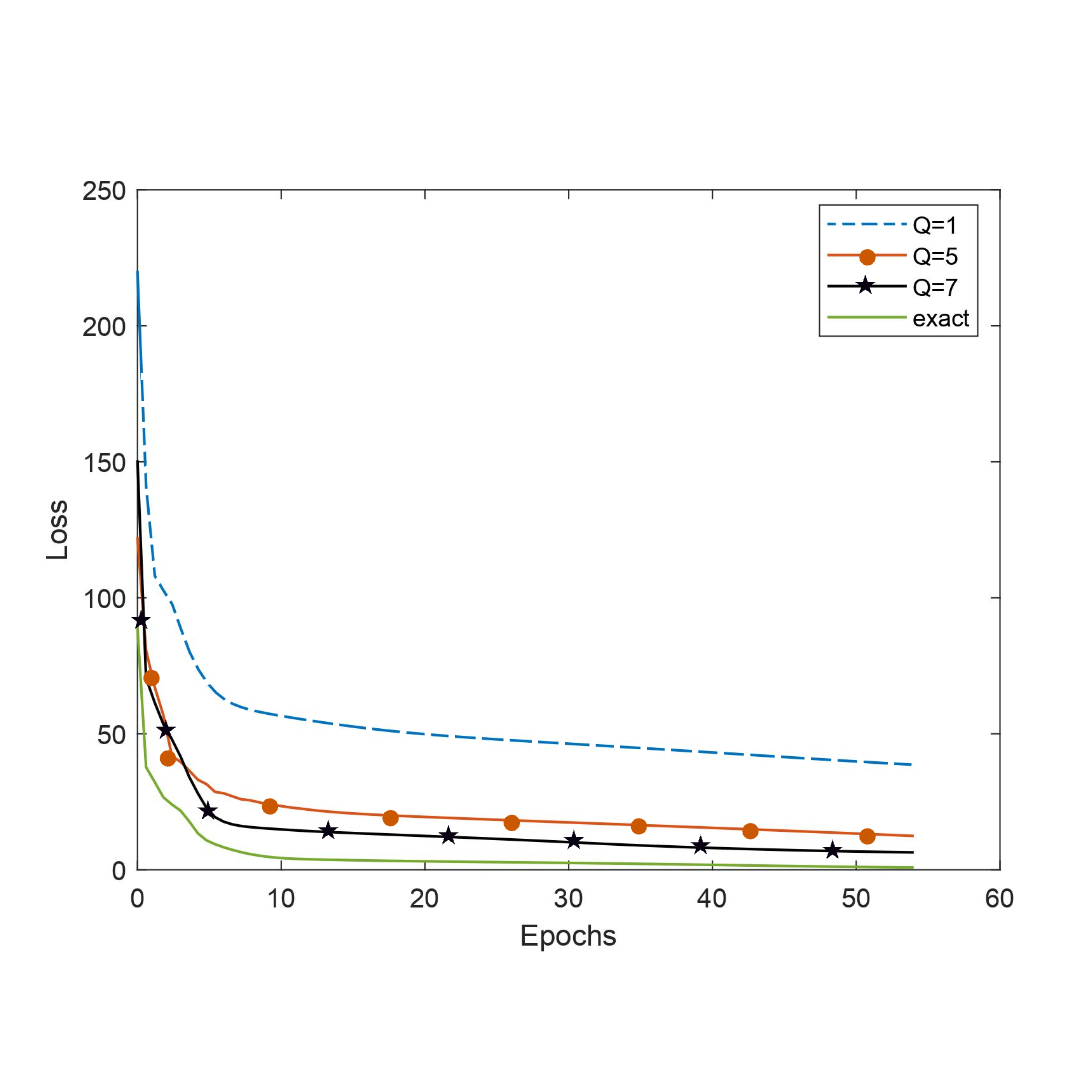}
	\caption{Epochs vs Loss for various number of multipliers for a Cascaded Volterra Filter}
	\label{Loss_Q}
\end{figure}   
Figures \ref{Feats_0} and  \ref{Feats_1} show one of the feature maps for an archery video and a fencing video. From Figures \ref{Feats_0}, \ref{Feats_1}-(b),(c) it can be seen that when only the RGB stream is used, a lot of the background area has high values, while on the other hand, when both streams are jointly used, the system is able to concentrate on more relevant features. In \ref{Feats_0}-(c), the system is seen to concentrate on the bow and arrow which are central to recognizing the action, while in \ref{Feats_1}-(c) the system is seen to concentrate on the pose of the human which is central to identifying a fencing action.
Figure \ref{Loss_Q} shows the Epochs vs Loss graph for a Cascaded Volterra Filter when a different number of multipliers ($Q$) are used to approximate the $2^{nd}$ order kernel. The green plot shows the loss when the exact kernel is learned, and it can be seen that the performance comes closer to the exact kernel as $Q$ is increased.

\begin{figure}[h]
	\centering
	\includegraphics[width=0.8\textwidth]{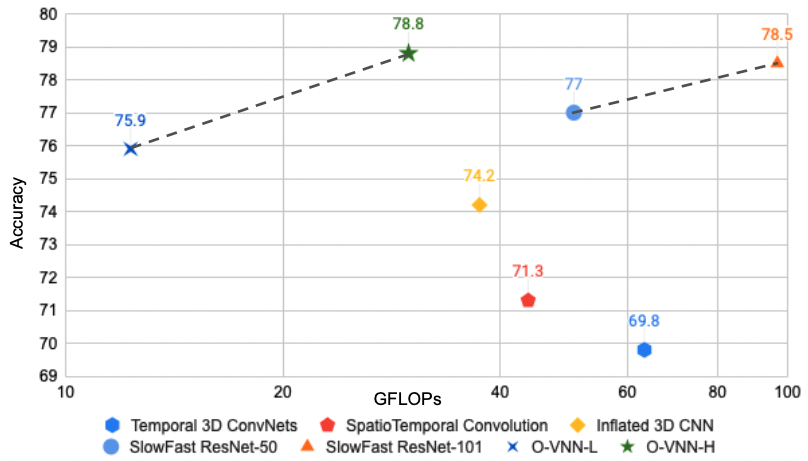}
	\caption{GFLOPs vs Accuracy for various Action Recognition methods discussed in Table \ref{Kinetics-Perf}}
	\label{GvA}
\end{figure}   

Figure \ref{GvA} depicts the plot for GFLOPs vs Accuracy of the various models compared in Table \ref{Kinetics-Perf} and highlights the reduction in model complexity achieved by the proposed method while achieving SOTA performance in action recognition. 

Figure \ref{SNR_UCF_101} evaluates the robustness of the proposed model (VNN) in presence of Gaussian noise and compares it with that of a CNN model \cite{carreira2017quo} for action recognition. It is observed that the VNN model is much more robust to Gaussian noise and provides up to ~20 \% improvement in classification accuracy. Furthermore,the VNN model experiences a graceful drop in performance compared to the CNN model which sees a severe drop at Signal to Noise Ratio (SNR) of 15 dB.

Figure \ref{FPS_UCF_101} evaluates performance when the number of Frames Per Second (FPS) is reduced. This illustrates that the VNN is better at modelling long term relationships in the time domain as compared to a CNN model. While unlike in the case of Gaussian noise, the VNN does see a significant drop in performance, it comes at a lower frame rate of 5 fps as compared to the 10 fps in the case of CNN.

\begin{figure}[h]
	\centering
	\includegraphics[width=0.7\textwidth]{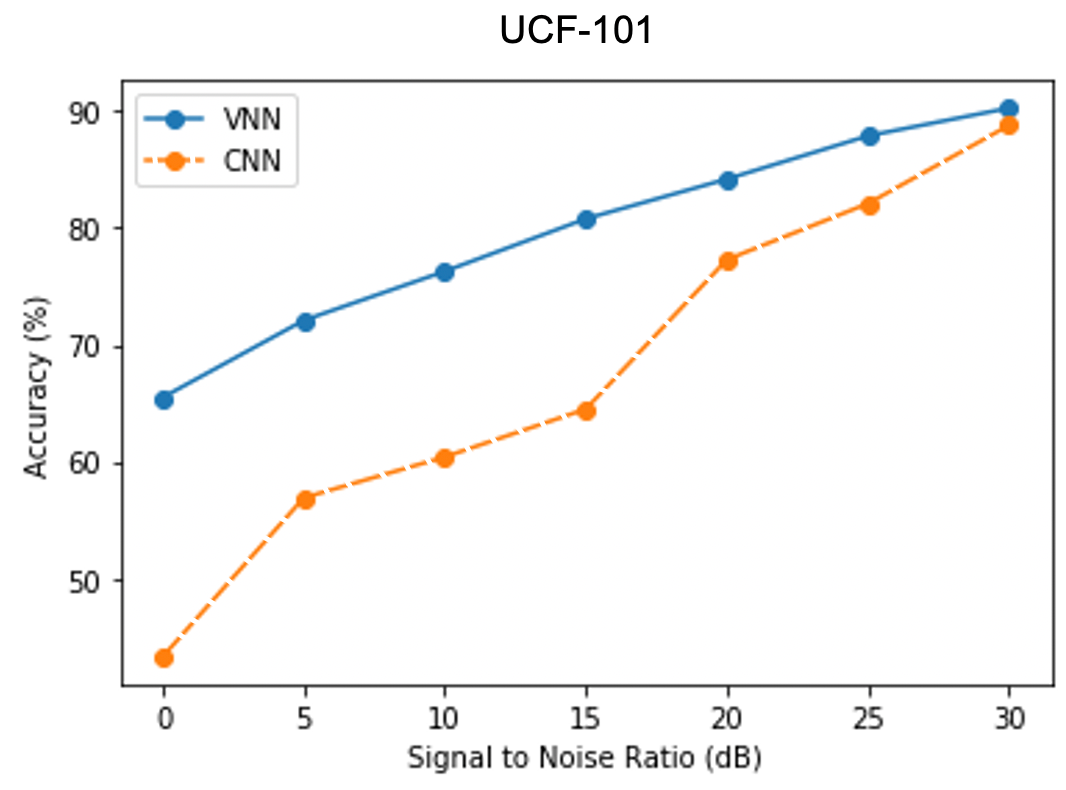}
	\caption{Performance comparison between VNN and CNN implementations when noise is added to the input videos}
	\label{SNR_UCF_101}
\end{figure}

\begin{figure}[h]
	\centering
	\includegraphics[width=0.7\textwidth]{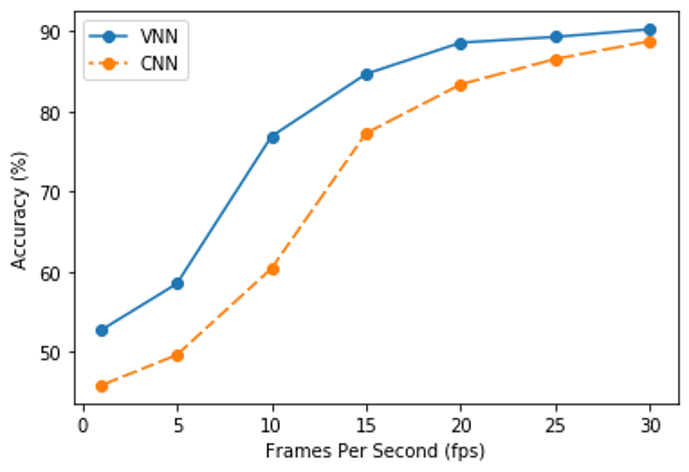}
	\caption{Performance comparison between VNN and CNN implementations for different fps}
	\label{FPS_UCF_101}
\end{figure}

\subsection{Image Generation}
To further demonstrate the capabilities of this VNN-based architecture we proceed to evaluate its generative capacity in a Generative Adversarial Netowrk (GAN) \cite{goodfellow2014generative} using our proposed formulation. Furthermore, we also use the VNN architecture jointly with a fixed dictionary as has been recently demonstrated in Stable GANs (STGANs) \cite{STGAN} to stabilize the training process. 

We design the experiment using the CIFAR10 dataset of  $60,000$ $32 \times 32$ color images of objects from 10 classes by allotting  50,000 images for training and 10,000 images for validation. To generate a $32 \times 32$ image, the input noise vector is first transformed into a latent space using a fixed  learned frame $\Theta^\star$. The generator uses two $2^{nd}$ order Volterra filter layers as opposed to four convolutional layers in \cite{STGAN}. The number of channels in the final layer is such that it is equal to the number of atoms in the fixed dictionary, which in this experiment is 384. The learned fixed dictionary $\Theta^\star$ is then multiplied by the generator output to produce generated image patches. The size of the dictionary in this experiment was $75 \times 384$. Figure \ref{generated_cifar} (a)-(c) shows the generated cifar10 images in this experiment and compares them with SPGAN \cite{shahinICCV} and Im-WGAN \cite{gulrajani2017improved}. The inception score of the proposed approach is compared with other existing approaches in Table \ref{IS} with a rather limited dictionary of Gaussians in one case (the performance would improve as the diversity of the dictionaries increases).

\begin{figure*}
	\centering
	\subfloat[Im-WGAN]{\includegraphics[width=0.5\textwidth]{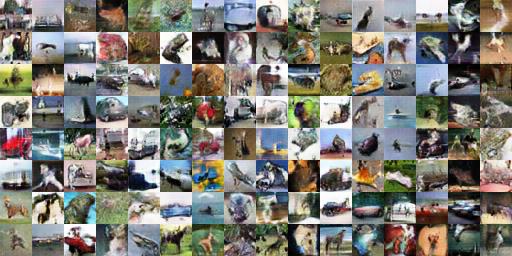}}\hfill
	\subfloat[SPGAN]{\includegraphics[width=0.5\textwidth]{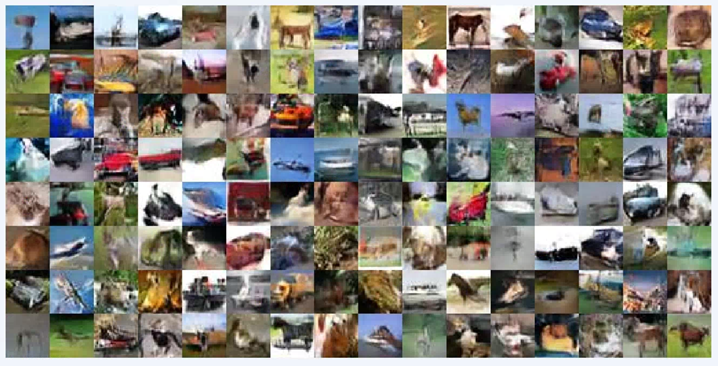}}\hfill
	\subfloat[Volterra STGAN]{\includegraphics[width=0.5\textwidth]{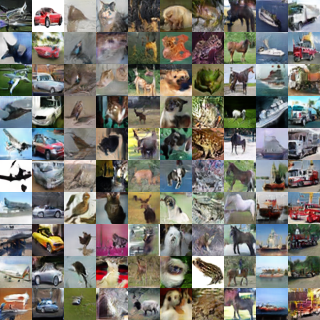}}\hfill
	\caption{Generated images using Cifar10 Dataset}
	\label{generated_cifar}
\end{figure*}

\begin{table}
	\begin{center}
		\begin{tabular}{|M{4.1cm}|M{1.5cm}|}
			\hline
			Method & \small Inception Score \\
			\hline
			\small WGAN & 5.95 \\ 
			\hline
			\small VNN-WGAN & 6.20 \\ 
			\hline
			\small SPGAN-recon & 6.70 \\
			\hline
			\small STGAN & 6.80 \\
			\hline
			\small Volterra STGAN &\textbf{7.05} \\ 
			\hline
		\end{tabular}
	\end{center}
	\caption{Comparison of various GAN based methods with proposed VNN STGAN in terms of inception score.} 
	\label{IS}
\end{table}

\section{Conclusion}
We proposed a novel network architecture for recognition of actions in videos, where the non-linearities were introduced by a Volterra Series Formulation. We propose a Cascaded Volterra Filter which leads to a significant reduction in parameters while exploring the same complexity of non-linearities in the data and attaining competitive and even better performance than SOA CNN. Such a Cascaded Volterra Filter was also shown to be a BIBO stable system. In addition, we also proposed the use of the Volterra Filter to fuse the spatial and temporal streams, hence leading to a non-linear fusion of the two streams. The combined competitive performance and  a significant reduction in parameter as well as sample complexity make VNN a viable alternative to SOA CNN.

\vskip 0.2in
\bibliography{sample}

\begin{thebibliography}{48}
\providecommand{\natexlab}[1]{#1}
\providecommand{\url}[1]{\texttt{#1}}
\expandafter\ifx\csname urlstyle\endcsname\relax
  \providecommand{\doi}[1]{doi: #1}\else
  \providecommand{\doi}{doi: \begingroup \urlstyle{rm}\Url}\fi

\bibitem[Abadi et~al.(2016)Abadi, Barham, Chen, Chen, Davis, Dean, Devin,
  Ghemawat, Irving, Isard, et~al.]{abadi2016tensorflow}
Mart{\'\i}n Abadi, Paul Barham, Jianmin Chen, Zhifeng Chen, Andy Davis, Jeffrey
  Dean, Matthieu Devin, Sanjay Ghemawat, Geoffrey Irving, Michael Isard, et~al.
\newblock Tensorflow: A system for large-scale machine learning.
\newblock In \emph{12th $\{$USENIX$\}$ Symposium on Operating Systems Design
  and Implementation ($\{$OSDI$\}$ 16)}, pages 265--283, 2016.

\bibitem[Baccouche et~al.(2011)Baccouche, Mamalet, Wolf, Garcia, and
  Baskurt]{baccouche2011sequential}
Moez Baccouche, Franck Mamalet, Christian Wolf, Christophe Garcia, and Atilla
  Baskurt.
\newblock Sequential deep learning for human action recognition.
\newblock In \emph{International workshop on human behavior understanding},
  pages 29--39. Springer, 2011.

\bibitem[Bartlett et~al.(2019)Bartlett, Long, Lugosi, and
  Tsigler]{bartlett2019benign}
Peter~L Bartlett, Philip~M Long, G{\'a}bor Lugosi, and Alexander Tsigler.
\newblock Benign overfitting in linear regression.
\newblock \emph{arXiv preprint arXiv:1906.11300}, 2019.

\bibitem[Carreira and Zisserman(2017)]{carreira2017quo}
Joao Carreira and Andrew Zisserman.
\newblock Quo vadis, action recognition? a new model and the kinetics dataset.
\newblock In \emph{proceedings of the IEEE Conference on Computer Vision and
  Pattern Recognition}, pages 6299--6308, 2017.

\bibitem[Dalal and Triggs(2005)]{dalal2005histograms}
Navneet Dalal and Bill Triggs.
\newblock Histograms of oriented gradients for human detection.
\newblock 2005.

\bibitem[Deng et~al.(2009)Deng, Dong, Socher, Li, Li, and
  Fei-Fei]{deng2009imagenet}
Jia Deng, Wei Dong, Richard Socher, Li-Jia Li, Kai Li, and Li~Fei-Fei.
\newblock Imagenet: A large-scale hierarchical image database.
\newblock In \emph{2009 IEEE conference on computer vision and pattern
  recognition}, pages 248--255. Ieee, 2009.

\bibitem[Diba et~al.(2017{\natexlab{a}})Diba, Fayyaz, Sharma, Karami, Arzani,
  Yousefzadeh, and Van~Gool]{diba2017temporal}
Ali Diba, Mohsen Fayyaz, Vivek Sharma, Amir~Hossein Karami, Mohammad~Mahdi
  Arzani, Rahman Yousefzadeh, and Luc Van~Gool.
\newblock Temporal 3d convnets: New architecture and transfer learning for
  video classification.
\newblock \emph{arXiv preprint arXiv:1711.08200}, 2017{\natexlab{a}}.

\bibitem[Diba et~al.(2017{\natexlab{b}})Diba, Sharma, and
  Van~Gool]{diba2017deep}
Ali Diba, Vivek Sharma, and Luc Van~Gool.
\newblock Deep temporal linear encoding networks.
\newblock In \emph{Proceedings of the IEEE conference on Computer Vision and
  Pattern Recognition}, pages 2329--2338, 2017{\natexlab{b}}.

\bibitem[Farabet et~al.(2012)Farabet, Couprie, Najman, and
  LeCun]{farabet2012learning}
Clement Farabet, Camille Couprie, Laurent Najman, and Yann LeCun.
\newblock Learning hierarchical features for scene labeling.
\newblock \emph{IEEE transactions on pattern analysis and machine
  intelligence}, 35\penalty0 (8):\penalty0 1915--1929, 2012.

\bibitem[Feichtenhofer et~al.(2016)Feichtenhofer, Pinz, and
  Zisserman]{feichtenhofer2016convolutional}
Christoph Feichtenhofer, Axel Pinz, and Andrew Zisserman.
\newblock Convolutional two-stream network fusion for video action recognition.
\newblock In \emph{Proceedings of the IEEE conference on computer vision and
  pattern recognition}, pages 1933--1941, 2016.

\bibitem[Feichtenhofer et~al.(2019)Feichtenhofer, Fan, Malik, and
  He]{feichtenhofer2019slowfast}
Christoph Feichtenhofer, Haoqi Fan, Jitendra Malik, and Kaiming He.
\newblock Slowfast networks for video recognition.
\newblock In \emph{Proceedings of the IEEE/CVF international conference on
  computer vision}, pages 6202--6211, 2019.

\bibitem[Firey(1960)]{firey1960remainder}
William~J Firey.
\newblock Remainder formulae in taylor's theorem.
\newblock \emph{The American Mathematical Monthly}, 67\penalty0 (9):\penalty0
  903--905, 1960.

\bibitem[Gao et~al.(2016)Gao, Beijbom, Zhang, and Darrell]{gao2016compact}
Yang Gao, Oscar Beijbom, Ning Zhang, and Trevor Darrell.
\newblock Compact bilinear pooling.
\newblock In \emph{Proceedings of the IEEE conference on computer vision and
  pattern recognition}, pages 317--326, 2016.

\bibitem[Goodfellow et~al.(2014)Goodfellow, Pouget-Abadie, Mirza, Xu,
  Warde-Farley, Ozair, Courville, and Bengio]{goodfellow2014generative}
Ian Goodfellow, Jean Pouget-Abadie, Mehdi Mirza, Bing Xu, David Warde-Farley,
  Sherjil Ozair, Aaron Courville, and Yoshua Bengio.
\newblock Generative adversarial nets.
\newblock In \emph{Advances in neural information processing systems}, pages
  2672--2680, 2014.

\bibitem[Gulrajani et~al.(2017)Gulrajani, Ahmed, Arjovsky, Dumoulin, and
  Courville]{gulrajani2017improved}
Ishaan Gulrajani, Faruk Ahmed, Martin Arjovsky, Vincent Dumoulin, and Aaron~C
  Courville.
\newblock Improved training of wasserstein gans.
\newblock In \emph{Advances in neural information processing systems}, pages
  5767--5777, 2017.

\bibitem[Hoffman et~al.(2016)Hoffman, Gupta, and Darrell]{hoffman2016learning}
Judy Hoffman, Saurabh Gupta, and Trevor Darrell.
\newblock Learning with side information through modality hallucination.
\newblock In \emph{Proceedings of the IEEE Conference on Computer Vision and
  Pattern Recognition}, pages 826--834, 2016.

\bibitem[Isola et~al.(2017)Isola, Zhu, Zhou, and Efros]{isola2017image}
Phillip Isola, Jun-Yan Zhu, Tinghui Zhou, and Alexei~A Efros.
\newblock Image-to-image translation with conditional adversarial networks.
\newblock In \emph{Proceedings of the IEEE conference on computer vision and
  pattern recognition}, pages 1125--1134, 2017.

\bibitem[Ji et~al.(2012)Ji, Xu, Yang, and Yu]{ji20123d}
Shuiwang Ji, Wei Xu, Ming Yang, and Kai Yu.
\newblock 3d convolutional neural networks for human action recognition.
\newblock \emph{IEEE transactions on pattern analysis and machine
  intelligence}, 35\penalty0 (1):\penalty0 221--231, 2012.

\bibitem[Karpathy et~al.(2014)Karpathy, Toderici, Shetty, Leung, Sukthankar,
  and Fei-Fei]{karpathy2014large}
Andrej Karpathy, George Toderici, Sanketh Shetty, Thomas Leung, Rahul
  Sukthankar, and Li~Fei-Fei.
\newblock Large-scale video classification with convolutional neural networks.
\newblock In \emph{Proceedings of the IEEE conference on Computer Vision and
  Pattern Recognition}, pages 1725--1732, 2014.

\bibitem[Kay et~al.(2017)Kay, Carreira, Simonyan, Zhang, Hillier,
  Vijayanarasimhan, Viola, Green, Back, Natsev, et~al.]{kay2017kinetics}
Will Kay, Joao Carreira, Karen Simonyan, Brian Zhang, Chloe Hillier, Sudheendra
  Vijayanarasimhan, Fabio Viola, Tim Green, Trevor Back, Paul Natsev, et~al.
\newblock The kinetics human action video dataset.
\newblock \emph{arXiv preprint arXiv:1705.06950}, 2017.

\bibitem[Kong and Fu(2018)]{kong2018human}
Yu~Kong and Yun Fu.
\newblock Human action recognition and prediction: A survey.
\newblock \emph{arXiv preprint arXiv:1806.11230}, 2018.

\bibitem[Krim et~al.()Krim, Jaffard, Roheda, Mahdizadehaghdam, and
  Panahi]{STGAN}
Hamid Krim, S~Jaffard, Siddharth Roheda, Shahin Mahdizadehaghdam, and Ashkan
  Panahi.
\newblock On stabilizing generative adversarial networks (stgans).

\bibitem[Krizhevsky et~al.(2012)Krizhevsky, Sutskever, and
  Hinton]{krizhevsky2012imagenet}
Alex Krizhevsky, Ilya Sutskever, and Geoffrey~E Hinton.
\newblock Imagenet classification with deep convolutional neural networks.
\newblock In \emph{Advances in neural information processing systems}, pages
  1097--1105, 2012.

\bibitem[Kuehne et~al.(2011)Kuehne, Jhuang, Garrote, Poggio, and
  Serre]{kuehne2011hmdb}
Hildegard Kuehne, Hueihan Jhuang, Est{\'\i}baliz Garrote, Tomaso Poggio, and
  Thomas Serre.
\newblock Hmdb: a large video database for human motion recognition.
\newblock In \emph{2011 International Conference on Computer Vision}, pages
  2556--2563. IEEE, 2011.

\bibitem[Kumar et~al.(2011)Kumar, Banerjee, Vemuri, and
  Pfister]{kumar2011trainable}
Ritwik Kumar, Arunava Banerjee, Baba~C Vemuri, and Hanspeter Pfister.
\newblock Trainable convolution filters and their application to face
  recognition.
\newblock \emph{IEEE transactions on pattern analysis and machine
  intelligence}, 34\penalty0 (7):\penalty0 1423--1436, 2011.

\bibitem[LeCun et~al.(1998)LeCun, Bottou, Bengio, Haffner,
  et~al.]{lecun1998gradient}
Yann LeCun, L{\'e}on Bottou, Yoshua Bengio, Patrick Haffner, et~al.
\newblock Gradient-based learning applied to document recognition.
\newblock \emph{Proceedings of the IEEE}, 86\penalty0 (11):\penalty0
  2278--2324, 1998.

\bibitem[Lin et~al.(2015)Lin, RoyChowdhury, and Maji]{lin2015bilinear}
Tsung-Yu Lin, Aruni RoyChowdhury, and Subhransu Maji.
\newblock Bilinear cnns for fine-grained visual recognition.
\newblock \emph{arXiv preprint arXiv:1504.07889}, 2015.

\bibitem[Liu et~al.(2009)Liu, Luo, and Shah]{liu2009recognizing}
Jingen Liu, Jiebo Luo, and Mubarak Shah.
\newblock Recognizing realistic actions from videos in the wild.
\newblock Citeseer, 2009.

\bibitem[Mahdizadehaghdam et~al.(2019)Mahdizadehaghdam, Panahi, and
  Krim]{shahinICCV}
Shahin Mahdizadehaghdam, Ashkan Panahi, and Hamid Krim.
\newblock Sparse generative adversarial network.
\newblock In \emph{Proceedings of the IEEE International Conference on Computer
  Vision Workshops}, pages 0--0, 2019.

\bibitem[Niebles et~al.(2010)Niebles, Chen, and Fei-Fei]{niebles2010modeling}
Juan~Carlos Niebles, Chih-Wei Chen, and Li~Fei-Fei.
\newblock Modeling temporal structure of decomposable motion segments for
  activity classification.
\newblock In \emph{European conference on computer vision}, pages 392--405.
  Springer, 2010.

\bibitem[Osowski and Quang(1994)]{osowski1994multilayer}
Stanislaw Osowski and T~Vu Quang.
\newblock Multilayer neural network structure as volterra filter.
\newblock In \emph{Proceedings of IEEE International Symposium on Circuits and
  Systems-ISCAS'94}, volume~6, pages 253--256. IEEE, 1994.

\bibitem[Roheda et~al.(2018{\natexlab{a}})Roheda, Krim, Luo, and
  Wu]{roheda2018decision}
Siddharth Roheda, Hamid Krim, Zhi-Quan Luo, and Tianfu Wu.
\newblock Decision level fusion: An event driven approach.
\newblock In \emph{2018 26th European Signal Processing Conference (EUSIPCO)},
  pages 2598--2602. IEEE, 2018{\natexlab{a}}.

\bibitem[Roheda et~al.(2018{\natexlab{b}})Roheda, Riggan, Krim, and
  Dai]{roheda2018cross}
Siddharth Roheda, Benjamin~S Riggan, Hamid Krim, and Liyi Dai.
\newblock Cross-modality distillation: A case for conditional generative
  adversarial networks.
\newblock In \emph{2018 IEEE International Conference on Acoustics, Speech and
  Signal Processing (ICASSP)}, pages 2926--2930. IEEE, 2018{\natexlab{b}}.

\bibitem[Roheda et~al.(2019)Roheda, Krim, Luo, and Wu]{roheda2019event}
Siddharth Roheda, Hamid Krim, Zhi-Quan Luo, and Tianfu Wu.
\newblock Event driven fusion.
\newblock \emph{arXiv preprint arXiv:1904.11520}, 2019.

\bibitem[Rudin et~al.(1964)]{rudin1964principles}
Walter Rudin et~al.
\newblock \emph{Principles of mathematical analysis}, volume~3.
\newblock McGraw-hill New York, 1964.

\bibitem[Schetzen(1980)]{schetzen1980volterra}
Martin Schetzen.
\newblock The volterra and wiener theories of nonlinear systems.
\newblock 1980.

\bibitem[Sermanet et~al.(2013)Sermanet, Eigen, Zhang, Mathieu, Fergus, and
  LeCun]{sermanet2013overfeat}
Pierre Sermanet, David Eigen, Xiang Zhang, Micha{\"e}l Mathieu, Rob Fergus, and
  Yann LeCun.
\newblock Overfeat: Integrated recognition, localization and detection using
  convolutional networks.
\newblock \emph{arXiv preprint arXiv:1312.6229}, 2013.

\bibitem[Simonyan and Zisserman(2014)]{simonyan2014two}
Karen Simonyan and Andrew Zisserman.
\newblock Two-stream convolutional networks for action recognition in videos.
\newblock In \emph{Advances in neural information processing systems}, pages
  568--576, 2014.

\bibitem[Sivic and Zisserman(2003)]{sivic2003video}
Josef Sivic and Andrew Zisserman.
\newblock Video google: A text retrieval approach to object matching in videos.
\newblock In \emph{null}, page 1470. IEEE, 2003.

\bibitem[Soomro et~al.(2012)Soomro, Zamir, and Shah]{soomro2012ucf101}
Khurram Soomro, Amir~Roshan Zamir, and Mubarak Shah.
\newblock Ucf101: A dataset of 101 human actions classes from videos in the
  wild.
\newblock \emph{arXiv preprint arXiv:1212.0402}, 2012.

\bibitem[Stone(1948)]{stone1948generalized}
Marshall~H Stone.
\newblock The generalized weierstrass approximation theorem.
\newblock \emph{Mathematics Magazine}, 21\penalty0 (5):\penalty0 237--254,
  1948.

\bibitem[Tran et~al.(2015)Tran, Bourdev, Fergus, Torresani, and
  Paluri]{tran2015learning}
Du~Tran, Lubomir Bourdev, Rob Fergus, Lorenzo Torresani, and Manohar Paluri.
\newblock Learning spatiotemporal features with 3d convolutional networks.
\newblock In \emph{Proceedings of the IEEE international conference on computer
  vision}, pages 4489--4497, 2015.

\bibitem[Tran et~al.(2018)Tran, Wang, Torresani, Ray, LeCun, and
  Paluri]{tran2018closer}
Du~Tran, Heng Wang, Lorenzo Torresani, Jamie Ray, Yann LeCun, and Manohar
  Paluri.
\newblock A closer look at spatiotemporal convolutions for action recognition.
\newblock In \emph{Proceedings of the IEEE conference on Computer Vision and
  Pattern Recognition}, pages 6450--6459, 2018.

\bibitem[Volterra(2005)]{volterra2005theory}
Vito Volterra.
\newblock \emph{Theory of functionals and of integral and integro-differential
  equations}.
\newblock Courier Corporation, 2005.

\bibitem[Wang et~al.(2009)Wang, Ullah, Klaser, Laptev, and
  Schmid]{wang2009evaluation}
Heng Wang, Muhammad~Muneeb Ullah, Alexander Klaser, Ivan Laptev, and Cordelia
  Schmid.
\newblock Evaluation of local spatio-temporal features for action recognition.
\newblock 2009.

\bibitem[Yi et~al.(2011)Yi, Krim, and Norris]{yi2011human}
Sheng Yi, Hamid Krim, and Larry~K Norris.
\newblock Human activity modeling as brownian motion on shape manifold.
\newblock In \emph{International Conference on Scale Space and Variational
  Methods in Computer Vision}, pages 628--639. Springer, 2011.

\bibitem[Zach et~al.(2007)Zach, Pock, and Bischof]{zach2007duality}
Christopher Zach, Thomas Pock, and Horst Bischof.
\newblock A duality based approach for realtime tv-l 1 optical flow.
\newblock In \emph{Joint pattern recognition symposium}, pages 214--223.
  Springer, 2007.

\bibitem[Zoumpourlis et~al.(2017)Zoumpourlis, Doumanoglou, Vretos, and
  Daras]{zoumpourlis2017non}
Georgios Zoumpourlis, Alexandros Doumanoglou, Nicholas Vretos, and Petros
  Daras.
\newblock Non-linear convolution filters for cnn-based learning.
\newblock In \emph{Proceedings of the IEEE International Conference on Computer
  Vision}, pages 4761--4769, 2017.

\end{thebibliography}

\end{document}